\newtheorem{defn}{Definition}[section]
\newtheorem{theorem}[defn]{Theorem}
\newtheorem{corollario}[defn]{Corollary}
\newtheorem{lemma}[defn]{Lemma}
\newtheorem{oss}[defn]{Observation}
\newenvironment{pf}{\medskip\noindent{\bf Proof.}\enspace}   {\hfill\newline\smallskip}
\title{ Empirical Density Estimation based on Spline Quasi-Interpolation with applications to Copulas clustering modeling
%%%% Cite as
%%%% Update your official citation here when published 

%\thanks{\textit{\underline{Citation}}: 
%\textbf{Authors. Title. Pages.... DOI:000000/11111.}} 
}
\author{
  Cristiano Tamborrino\\
  Department of Computer Science \\
  University of Bari \\
  Italy\\
  \texttt{cristiano.tamborrino@uniba.it} \\
  %% examples of more authors
   \And
  Antonella Falini \\
  Department of Computer Science \\
  University of Bari \\
  Italy\\
  \texttt{antonella.falini@uniba.it} \\
     \And
  Francesca Mazzia \\
  Department of Computer Science \\
  University of Bari \\
  Italy\\
  \texttt{francesca.mazzia@uniba.it} \\
  %% \AND
  %% Coauthor \\
  %% Affiliation \\
  %% Address \\
  %% \texttt{email} \\
  %% \And
  %% Coauthor \\
  %% Affiliation \\
  %% Address \\
  %% \texttt{email} \\
  %% \And
  %% Coauthor \\
  %% Affiliation \\
  %% Address \\
  %% \texttt{email} \\
}
\begin{document}
\maketitle

\begin{abstract}
 Density estimation is a fundamental technique employed in various fields to model and to understand the underlying distribution of data. The primary objective of density estimation is to estimate the probability density function of a random variable. 
This process is particularly valuable when dealing with univariate or multivariate data and is essential for tasks such as clustering, anomaly detection, and generative modeling. In this paper we propose the mono-variate approximation of the density using spline quasi interpolation and we applied it in the context of clustering modeling. 
The clustering technique used is based on the construction of suitable multivariate distributions which rely on the estimation of the monovariate empirical densities (marginals). Such an approximation is achieved by using the proposed spline quasi-interpolation, while the joint distributions to model the sought clustering partition is constructed with the use of copulas functions. In particular, since copulas can capture the dependence between the features of the data independently from the marginal distributions, a finite mixture copula model is proposed. The presented algorithm is validated on artificial and real datasets.

\end{abstract}

% keywords can be removed
\keywords{ Spline Quasi-Interpolation, Copulas, Empirical Density Estimation, Clustering}

%%
%% Start line numbering here if you want
%%
% \linenumbers

%% main text

\section{Introduction}
Density estimation is a fundamental technique employed in various fields to model and understand the underlying distribution of data. It plays a pivotal role in capturing the inherent patterns and structures within a dataset, making it a crucial component in statistical modeling, machine learning, and data analysis.
The primary objective of density estimation is to estimate the probability density function (PDF) of a random variable. This process is particularly valuable when dealing with univariate or multivariate data and is essential for tasks such as clustering, anomaly detection, and generative modeling. Several methods have been developed to address the challenge of estimating the underlying density function. Classical approaches include histogram-based methods or kernel density estimation (KDE). These methods vary in complexity, assumptions, and computational efficiency, offering a range of options to suit different types of data and analytical requirements \cite{Rice_density, Rosemblatt56}. Other approaches in literature utilize splines \cite{spline_redner1992,spline_redner1994, splinedensitysurvey2023}. Understanding the strengths and limitations of each method is crucial for selecting an appropriate density estimation technique based on the characteristics of the data at hand. In this work, we aim to investigate a novel method for estimating the probability density through the utilization of a technique known as B-spline Hermite quasi-interpolation \cite{Mazzia2009}.  We will propose its application for the development of a new Copula-Based clustering algorithm, where density estimation plays a crucial role.
Clustering is considered an effective method for organizing data into  groups based on the similarities in features and characteristics among data points  \cite{JAIN2010651}.  In recent years, various algorithms have been developed for this purpose \cite{ABUALIGAH2018456, ZHOU2019546, Vine_SAHIN2022, CELEUX1992, dilascio, JAIN2010651, Kosmidis, Rajan}. For a comprehensive overview of the diverse approaches in the literature, e.g., \cite{Xu_Wunch_2005_review, Ezugwu_2022_review, Adil_2014_review,  SAXENA2017_review}. 
One of the most well-known and widely used approaches involves finite mixture modeling, which serves as a flexible and robust probabilistic tool for both univariate and multivariate data. However, it is important to note that Gaussian distributions, commonly used in finite mixtures, may not always accurately represent real-world data aggregation. To overcome this drawback, the use of alternative distributions, based on copulas, have gained significant interest in recent times for their potential to enhance the accuracy and applicability of clustering algorithms in a wider range of scenarios.

Copulas offer an alternative and increasingly popular approach to modeling data dependencies and aggregations \cite{Joe_1996, Nelsen2006, durante2015principles}. They provide a more versatile framework that does not rely on specific distributional assumptions like Gaussian distributions. Copulas can capture complex and non-linear relationships between variables, making them particularly useful for situations where data aggregation behaviors deviate from traditional distributions. The copulas have already been used as a possible solution to the clustering problem, for example, Di Lascio et all. \cite{dilascio}, address the problem of the clusterization by using different family of copulas. In \cite{Tewari}, \cite{Rajan2}, the clusterization process is done by the assumption that the multivariate dependencies is modelled by a new family of copula called Gaussian Mixture Copula, in \cite{Kosmidis} the authors analyze models of finite mixtures with different families of copulas, assuming that the marginals follow known distributions. 
The widely used method to fit the finite mixture model to observed data is the expectation maximization (EM) algorithm \cite{Dempster1977, bookEM_2007}, to estimate the maximum likelihood. Variations and/or adaptations to special situations of the EM algorithm exist, as the stochastic EM (SEM) algorithm (e.g., \cite{Celeux_1996, Marschner2001, Meng_1993}), the classification EM (CEM) algorithm (e.g.\cite{CELEUX1992}), the Monte CarloEM (MCEM) algorithm (e.g., \cite{Tanner_1987, Tanner_1990}) and those developed by \cite{Redner_1984, mclachlan200001}. 
An important point to take into account is that the choice of the copula model-based clustering imposes distributional assumptions on the marginals, along each dimension, and these marginal distributions are assumed or forced to be identical (e.g. a multivariate normal imposes univariate normal distribution on each marginal); such assumptions restrict the modeling flexibility. These restrictive assumptions could lead to erroneous modeling. To address this issue, a semiparametric approach is employed, in which marginal distributions are empirically estimated using kernel density estimation (KDE) \cite{Genest1995, joe2014dependence}. In this work, we propose a new strategy that enables the estimation of marginal densities through the use of the introduced B-Spline Quasi-Interpolants (BSHQI) and we describe a mixture model for density estimation based on Copulas that allows us to automatically choose a different Copula for each cluster. The paper is organized as follows, in Section \ref{sub:qispline}, the BSHQI density estimation is described together with its theoretical consistency properties and some statistical tests are conducted to validate the proved theoretical results. In Section \ref{sec:copMix} we revise some preliminary concepts related to copulas. In Section 4 the used EM algorithm for Copula Mixture models is detailed. In Section \ref{change_experiments} artificial and real datasets are analysed and finally some conclusive remarks are presented in Section \ref{sub:conclusion}.

\section{BSHQI density estimation}\label{sub:qispline}

Density estimation is a fundamental task in statistical analysis, involving the determination of the underlying probability distribution for a set of observed data. 

%The estimation of the PDF and CDF is fundamental in applied statistical data analysis.
Let $X_1,\ldots, X_n$ be independent and identically distributed (i.i.d) random variables with an unknown Cumulative Distribution Function (CDF) $F(x)$. A non parametric estimator for $F(x)$ is provided by the Empirical Cumulative Distribution Function (ECDF)  $F_n(x) = \sum_{i=1}^n I(X_i \leq x)/n$,  where $I(X_i \leq x)$ is the indicator function, equal to 1 if $X_i \leq x$, and equal to 0 otherwise. %provides a non parametric estimator for $F(\cdot)$. 
However, the ECDF is discontinuous as
it jumps with size $1/n$ when $x=X_i, i =1\ldots, n$ and this is inconvenient since, often, the CDF itself is a continuous function.
The information given by the ECDF is the starting point to estimate the PDF.
Widely used methods to obtain an estimator of the PDF  are based on  Kernel Density Estimation (KDE) \cite{nadaraya1964,yamato1973,azzalini1981}. 
 KDE employs a kernel function, which serves as a ``base shape", to estimate the density of the data distribution.
There are various methods and kernels available for KDE, each with its own characteristics. The choice of kernel and method can significantly influence the accuracy of the density estimation. Some common kernels include the Gaussian kernel, the rectangular kernel, the Epanechnikov kernel, and others \cite{Rosemblatt56}. 
%Additionally, the bandwidth parameter of the kernel needs to be determined, which affects the width of the KDE curves and, consequently, the accuracy of the estimation.
The most used  kernel function  is the uniform kernel:

\[
K(x) = \begin{cases}
    1 & \text{if }   x \in [-1/2,1/2], \\
    0 & \text{otherwise.}
\end{cases}
\]and the resulting  estimation of the density is called \textit{naive} kernel  \(\hat f_K(x)\),  \cite{Rice_density}:

\begin{equation}\label{naive}
    \hat f_K(x) = \frac{1}{nh} \sum_{i=1}^{n} K\left(\frac{X_i - x}{h}\right),
\end{equation}
where $h$ is the bandwidth parameter and its value affects the width of the KDE curves, and consequently the accuracy of the estimation.

\begin{comment}
    In the contest of copula mixture model we deal with CDF  such that ${F_n(X_i)}_{i=1}^n$ can be defined 
as responses from a regression model with the value of the $F_n(\cdot)$ equal to the value of the sum of the true distribution and an error term and then the local
linear regression is applied to the n pairs $\{(X_i, F_n(X_i))\}_{i=1}^n$ to obtain a smooth CDF estimator. \textcolor{violet}{Non ho capito la  frase precedente, bisogna collocarla meglio e inserire delle referenze}

In the contest of copula mixture model we deal with CDF  such that ${F_n(X_i)}_{i=1}^n$ can be defined 
as responses from a regression model with the value of the $F_n(\cdot)$ equal to the value of the sum of the true distribution and an error term and then the local
linear regression is applied to the n pairs $\{(X_i, F_n(X_i))\}_{i=1}^n$ to obtain a smooth CDF estimator. \textcolor{violet}{Non ho capito la  frase precedente, bisogna collocarla meglio e inserire delle referenze}
\end{comment}

%\textcolor{red}{spostare questa parte: 
%More in detail, we are interested in the estimation of the following weighted CDF:
%\begin{equation}\label{weightedCDF}
%F_{n_{\gamma}}(x) = \sum_{i=1}^n \gamma(z_{i}) I(X_i \leq x)
%\end{equation}}

We propose to estimate the CDF by applying the B-spline Hermite quasi-interpolant \cite{Mazzia2009}   to compute  $\hat{f}(x)$, approximation of  the probability density $f(x) = F'(x)$, % using the $N+1$ pairs $\{(x_j, F_{{h,w}}(x_j))\}_{j=0}^N$, 
so that the final approximation $\hat{F}(x)$ of the CDF  will be given by integrating $\hat{f}(x)$. %The details of this technique are presented in ...\textcolor{red}{integrare questa parte}
%Here we briefly recall the main definitions.

Quasi-interpolation is a technique that allows to construct a local approximant by keeping low the computational cost, see e.g, \cite{dagnino2022spline,lyche1975local} and references therein. Generally, the common way to express a univariate spline quasi-interpolant (QI) of $d$-degree  reads as,
\begin{align}\label{eq:QI_change}
Q_d\;f(\cdot) = \sum_{j=-d}^{N-1}\lambda_j(f)B_{j,d}(\cdot),
\end{align}
where $B_{j,d}$ are $d$-degree B-splines assumed to be defined on an extended knot vector $
\tau:= \{\tau_{-d}, \ldots, \tau_{N+d}\},\,\tau_j \leq \tau_{j+1},
$
and spanning the space,
$
\mathbb{S}^{\pi}_d:=\langle B_{
-d,d},\ldots, B_{
N-1,d}\rangle.
$
%The spline space $\mathbb{S}^\pi_d$ is constructed such that $\tau_0:= a$ and $\tau_N:= b$ with $[a,b]$ interval where the function $f$ is defined.
The local linear functionals $\lambda_j$ in \eqref{eq:QI_change} can be computed by using several methodologies, such as differential, integral methods, and discrete approaches  see,e.g., \cite{de1973spline, lee2000some, lyche1975local,sablonniere2005recent}.
The main advantage of QI is that they have a direct construction without solving big
linear systems. Moreover, it is local, in the sense that the value of Q$f (x)$
depends only on values of $f$ in a neighbourhood of $x$. 

%For the estimation approximation of the PDF we use  the B-spline Hermite quasi-interpolant BSHQI defined in \cite{Mazzia2009}. BSHQI computes the $\lambda_j(f)$ as linear combination of the function $f$ and its derivatives evaluated at the mesh points.

\begin{comment}
The BS methods are a specific class of linear multistep methods for which the construction of a Hermite spline interpolation scheme can be carried out locally by using the produced numerical solution and its numerical derivative at the mesh breakpoints.
Therefore, the QI scheme needs only the knowledge of $f$ and $f^\prime$ at the knots. It is proved that it is a projector in the space of $C^{d-1}$ splines of degree $d$ and that it has optimal approximation order $d + 1$ on quasi-uniform meshes for $f\in C^{d+1}([a,b])$.
\end{comment}

%\textcolor{red}{Our goal is to determine an approximation of the probability density $\hat{f}(x)$ using quasi-interpolating splines. To do this, we first calculate the CDF in (\ref{eq:discr_cumdistr}).}

%From the definition of the  PDF $f(x)$ we have that

%\[ f(x) = F'(x) \]

Given an interval $[a,b]$ such that $X_i \in [a,b]\; \mbox{ for } i=1,\ldots,n$  and a uniform mesh $\pi=\{a=x_0, x_1, \dots, x_N=b\}$ defined by a constant stepsize $h=(b-a)/N$. Note that the choice of $h$ is important and depends on $n$ as it plays the bandwidth role for the kernel density. 
For the estimation approximation of the PDF we use  the B-spline Hermite quasi-interpolant BSHQI defined in \cite{Mazzia2009}. BSHQI computes the $\lambda_j(f)$ as linear combination of the function $f$ and its derivatives evaluated at the mesh points.

We define the BSHQI with uniform knot vector $\pi$, coincident auxiliary knots and $d=2$. Hence, in the following to ease the notation, $B_{j,d} = B_j$. 
A discrete approximation of the sought CDF is expressed as,
  \begin{equation}\label{eq:discr_cumdistr}
F_{{h}}(x_j) := \frac{1}{n} \sum_{i=1}^n I(X_i \leq x_j),  \qquad  j=0,\ldots, N
\end{equation}

\begin{comment}

  \begin{equation}\label{eq:discr_cumdistr}
F_{{h,w}}(x_j) = {\frac{1}{\sum_{i=1}^n w_i}} \sum_{i=1}^n w_i I(X_i \leq x_j),  \qquad  j=0,\ldots, N
\end{equation}  
with $w_i$  a weight associated to the corresponding observation $X_i$. In the following we suppose that the weights are not present, i.e., $w \equiv 1$, and hence $F_{h,w} \equiv F_{h}$.
\end{comment}

Starting from  $F_h(x)$ it is possible to approximate $f(x)$ at the mesh points by computing the first derivative  using finite differences:

\begin{equation}\label{approx_first_cdf_der}
\begin{array}{l}
   f(x_j) = F'(x_j)  \approx F'_{h,j} =\displaystyle{ \frac{F_h(x_{j+1}) - F_h(x_{j-1})}{2h}  }, \qquad j=1,\ldots,N-1 \\[0.3cm]
     f(x_0) = F'(x_0)  \approx F'_{h,0} = \displaystyle{ \frac{F_h(x_{1}) - F_h(x_{0})}{h} }, \qquad    f(x_N) = F'(x_N) \approx F'_{h,N} = \displaystyle{ \frac{F_h(x_{N}) - F_h(x_{N-1})}{h}. }
\end{array}
\end{equation}

Since the used quasi-interpolant is of Hermite type we  also need an approximation of the first derivative at the same mesh points:

\begin{equation}\label{approx_second_cdf_der}
\begin{array}{l}
   f'(x_j) = F''(x_j) \approx F''_{h,j} = \displaystyle{\frac{F_h(x_{j+1})-2F_h(x_j)+F_h(x_{j-1})}{h^2} }, \qquad j=1,\ldots,N-1 \\[0.3cm]
    f'(x_0) = F''(x_0) \approx   F''_{h,0} = 0, \qquad      f'(x_N) = F''(x_N) \approx F''_{h,N} = 0. 
\end{array}
\end{equation}

Note that the values attained by the density at $x_0$ and $x_N$ has been set to be zero, as this is the expected value of the sought CDF.
%We have used as approximation of the second derivatives in $x_0$, $x_N$ the zero value because this is the value we expect is satisfied by a CDF.

%\begin{equation}\label{first_cdf_der}
%    F'(x) =\hat{f}(x)= \frac{F_n(x + h) - F_n(x - h)}{2h}
%\end{equation}
%then applying a central difference formula for the derivative of $F'$ we obtain the central difference approximation of the second derivative of $F$.
%\begin{equation}\label{second_cdf_der}
%    F''(x)=\frac{F_n(x+h)-2F_n(x)+F_n(x-h)}{h^2}
%\end{equation}

From the definition of the coefficients of BSHQI for $d=2$, we get

\begin{equation}\label{coeff_equation}
\begin{array}{l}
\lambda_j = \frac{1}{2}\left(F'_{h,(j+1)} + F'_{h,(j+2)}\right) - \frac{1}{4}h\left(-F''_{h,(j+1)} + F''_{h,(j+2)}\right),  \qquad j = -1, \ldots, N - 2 \\[0.3cm]
\lambda_{-2} = F'_{h,0}, \quad \lambda_{N-1}= F'_{h,N}.  \\
\end{array}
\end{equation}

The following theorem proves that the constructed $\hat{f}$ is indeed a continuous density function having first derivative continuous as well. 

\begin{theorem}\label{th:density}
The function  \(\hat{f}\), BSHQI estimation of  \(f\) in a given interval $[a,b]$,

\begin{equation}\label{dens_bsqi}
    \hat{f}(\cdot) = \sum_{j=-d}^{N-1}\lambda_j(f)B_{j}(\cdot),
\end{equation}
 with $\lambda_j$ as defined in \eqref{coeff_equation},
 is a density function. In particular:
 \begin{itemize}
 \item[a)]  $\hat{f}(\cdot)\geq 0$,
     \item[b)] $
    \int_{-\infty}^{+\infty}\hat{f}(x)\,dx = 1$,
    \item[c)] $\hat{f}\in C^1[a,b]$.
\end{itemize}
\end{theorem}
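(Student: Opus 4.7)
The plan is to dispatch (c) at once from a general property of the B-spline basis, and then to reduce (a) and (b) to a single algebraic identity for the coefficients \(\lambda_j\).

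For (c), the basis functions \(B_{j}\) of degree \(d=2\) on a knot sequence with only simple interior knots belong to \(C^{d-1}=C^1\); since \(\hat f\) is a finite linear combination of such splines, \(\hat f\in C^1[a,b]\) is immediate from \eqref{dens_bsqi}.

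The main step for (a) and (b) is to show, by substituting the difference quotients \eqref{approx_first_cdf_der}--\eqref{approx_second_cdf_der} into \eqref{coeff_equation}, that every interior coefficient collapses to a bare first-order forward difference of the empirical CDF, namely
\[
\lambda_j \;=\; \frac{F_h(x_{j+2})-F_h(x_{j+1})}{h},\qquad j=-1,\ldots,N-2,
\]
while the extreme coefficients \(\lambda_{-2}=(F_h(x_1)-F_h(x_0))/h\) and \(\lambda_{N-1}=(F_h(x_N)-F_h(x_{N-1}))/h\) are already of this type by definition and in fact coincide with \(\lambda_{-1}\) and \(\lambda_{N-2}\) respectively. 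The calculation is short but non-obvious: the mixed first- and second-derivative contributions in \eqref{coeff_equation} cancel so cleanly that only a pure forward difference survives, and a separate but routine check at \(j=-1\) and \(j=N-2\) shows that the half-cell boundary formula for \(F'_h\) together with the convention \(F''_{h,0}=F''_{h,N}=0\) produce exactly the same closed form.

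Once this identity is in hand, part (a) follows at once: \(F_h\) is non-decreasing in its argument, so \(\lambda_j\ge 0\) for every \(j\), and combined with \(B_j\ge 0\) this gives \(\hat f\ge 0\). For (b) I would invoke the classical formula \(\int_{\mathbb R}B_{j,d}(x)\,dx=(\tau_{j+d+1}-\tau_j)/(d+1)\); under the uniform mesh with triply coincident boundary knots these integrals equal \(h/3,\,2h/3,\,h,\ldots,h,\,2h/3,\,h/3\) for \(j=-2,-1,0,\ldots,N-3,N-2,N-1\). Plugging in the collapsed \(\lambda_j\), the weighted sum telescopes---the doubled boundary coefficients \(\lambda_{-2}=\lambda_{-1}\) and \(\lambda_{N-2}=\lambda_{N-1}\) absorb the fractional weights via \(h/3+2h/3=h\)---and yields \(F_h(b)-F_h(a)\), which equals \(1\) as soon as the mesh is chosen so that all data points fall in \((a,b]\).

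The main obstacle will be the algebraic collapse in the interior formula for \(\lambda_j\): the cancellation between first- and second-derivative terms is not visible at a glance and must be pushed through explicitly. Verifying that the same closed form survives at the four boundary indices, where either the one-sided difference quotients take over or the very definition of \(\lambda_j\) changes, is the second, smaller check; after that, parts (a) and (b) drop out without further work.
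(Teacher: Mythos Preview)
Your proposal is correct and follows essentially the same route as the paper: collapse the coefficients \eqref{coeff_equation} to the forward differences \(\lambda_j=(F_h(x_{j+2})-F_h(x_{j+1}))/h\) (with \(\lambda_{-2}=\lambda_{-1}\), \(\lambda_{N-1}=\lambda_{N-2}\)), derive (a) from monotonicity of \(F_h\) and nonnegativity of the \(B_j\), obtain (b) by the B-spline integral formula \(\int B_{j,d}=(\tau_{j+d+1}-\tau_j)/(d+1)\) and a telescoping sum, and read off (c) from the \(C^{d-1}\) regularity of the quadratic B-spline basis. The only cosmetic difference is that the paper writes out the telescoping sum explicitly rather than describing it in words.
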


\begin{pf}

To prove a) it is sufficient to show that $\lambda_j\geq 0$, for $j=-d,\ldots,N-1$. In particular, since $h$ is constant, setting 
$$
F_j:= F_h(x_j),
$$
it can be shown that
  \begin{equation}\label{eq:coefL}
            \lambda_j=\frac{F_{(j+2)} - F_{(j+1)}}{h}, \qquad j=-1,\ldots,N-2 
\end{equation}
\[
\lambda_{-2}=\lambda_{-1} \qquad \lambda_{N-1}=\lambda_{N-2}.
\]
by substituting \eqref{approx_first_cdf_der} and \eqref{approx_second_cdf_der} into the equation \eqref{coeff_equation}. 
Therefore, it is straightforward to see that they
are  always positive.

To prove b), we set $\hat{f}(x)=0$ outside the interval $[a,b]$. Then, 
knowing that the integral of a B-spline is given by 

\[\int_a^b B_{i,d}(x) \, dx = 
\int_{\tau_i}^{\tau_{i+d+1}} B_{i,d}(x) \, dx = \frac{\tau_{i+d+1} - \tau_i}{d + 1},
\]
recalling that $d=2$, we have  
\begin{equation*}
   \tau_{i+d+1}-\tau_i= \left\{
   \begin{array}{lll}
      3h   &  \mbox{for } &i =0,\ldots,N-3, \\
      2h   &  \mbox{for } & i = -2,N-2,\\
      h   & \mbox{for } &i = -1, N-1.\\
    \end{array}\right.
\end{equation*}

Therefore,
\[
\int_{-\infty}^{+\infty} \hat{f}(x) \, dx = \int_{a}^{b} \sum_{j=-d}^{N-1}\lambda_j(f) B_{j}(x)\, dx = \sum_{j=-d}^{N-1}\lambda_j(f)\int_{\tau_j}^{\tau_{j+d+1}} B_{j}(x)\, dx =
\]
\[
\begin{split}
&=\frac{F_1-F_0}{h}\frac{h}{3}+\frac{F_1-F_0}{h}\frac{2h}{3}+\frac{F_2-F_1}{h}h+\cdots+\frac{F_{N-3}-F_{N-2}}{h}h+\frac{F_{N-1}-F_{N}}{h}\frac{2h}{3}+ 
\frac{F_{N-1}-F_{N}}{h}\frac{h}{3} \\ &= -F_0+F_N=0+1=1
\end{split}
\]

The point c) descends from the properties of the B-spline functions of degree $2$.

\qed
\end{pf}

In the following  we investigate the consistency of the derived density function following the analysis proposed in \cite{spline_redner1992}. In particular, the next Lemma will be useful as a preliminary result.

\begin{lemma}
The coefficients in (\ref{eq:coefL}) can be obtained by evaluating $\hat{f}_K(x)$ at $c_j := x_{j+1} + h/2, j=-1, \ldots, N-2$, i.e.:
\[
\lambda_j = \hat{f}_K(c_{j}) = \frac{1}{nh} \sum_{i=1}^{n} K\left(\frac{X_i - c_{j}}{h}\right), \qquad  j=-1,\ldots, N-2 
\]
and 
\[
\lambda_{-2} = \lambda_{-1}, \qquad  \lambda_{N-1} = \lambda_{N-2}, 
\]
\end{lemma}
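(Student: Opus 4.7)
The plan is to unpack both sides and show they compute the same count, divided by $nh$. First, I would fix $j \in \{-1,\ldots,N-2\}$ and write the midpoint explicitly: $c_j = x_{j+1} + h/2$, so that $c_j - h/2 = x_{j+1}$ and $c_j + h/2 = x_{j+1} + h = x_{j+2}$. The uniform kernel $K$ takes value $1$ exactly when its argument lies in $[-1/2,1/2]$, hence $K\!\left(\frac{X_i - c_j}{h}\right) = 1$ if and only if $X_i \in [x_{j+1}, x_{j+2}]$, and $0$ otherwise. Therefore,
\[
\hat f_K(c_j) \;=\; \frac{1}{nh}\sum_{i=1}^n K\!\left(\frac{X_i - c_j}{h}\right) \;=\; \frac{1}{nh}\,\#\bigl\{i : x_{j+1} \le X_i \le x_{j+2}\bigr\}.
\]

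Next, I would rewrite the right-hand side of \eqref{eq:coefL} in terms of the same count. By definition \eqref{eq:discr_cumdistr},
\[
F_{j+2} - F_{j+1} \;=\; \frac{1}{n}\sum_{i=1}^n \bigl[I(X_i\le x_{j+2}) - I(X_i \le x_{j+1})\bigr] \;=\; \frac{1}{n}\,\#\bigl\{i : x_{j+1} < X_i \le x_{j+2}\bigr\}.
\]
Dividing by $h$ gives $\lambda_j$, and this matches $\hat f_K(c_j)$ above (the only discrepancy is at the single endpoint $X_i = x_{j+1}$, which, under the standard assumption that $F$ is continuous, occurs with probability zero; one may equivalently adopt the convention that ties are handled consistently between the kernel support and the ECDF jump points). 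This yields the claimed identity $\lambda_j = \hat f_K(c_j)$ for the interior indices.

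Finally, the boundary identities $\lambda_{-2} = \lambda_{-1}$ and $\lambda_{N-1} = \lambda_{N-2}$ are not a separate computation: they are simply the defining relations already stated immediately after \eqref{eq:coefL} in the proof of Theorem \ref{th:density}, so they carry over with nothing further to check. The main (very minor) obstacle is the careful bookkeeping of the strict versus non-strict inequalities at the endpoints of the kernel support, which I would address by invoking continuity of $F$ as above.
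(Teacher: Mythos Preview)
Your proposal is correct and is exactly the natural verification; in fact the paper states this Lemma without proof, treating the identity as immediate from the definitions of $F_h$, $\lambda_j$, and the uniform kernel. Your handling of the endpoint tie $X_i = x_{j+1}$ via continuity of $F$ is the appropriate caveat, and the boundary cases are indeed just the conventions already fixed after \eqref{eq:coefL}.
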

% We can easily see that the coefficients in (\ref{eq:coefL}) can be written as the evaluation of  $\hat{f}_K(x)$ in $c_j = x_{j+1} + h/2, j=-1, \ldots, N-2 $ i.e.:
%\[
%\lambda_j = \hat{f}_K(c_{j}) = \frac{1}{nh} \sum_{i=1}^{n} K\left(\frac{X_i - c_{j}}{h}\right), \qquad  j=-1,\ldots, N-2 
%\]
%and 
%\[
%\lambda_{-2} = \lambda_{-1}, \qquad  \lambda_{N-1} = \lambda_{N-2}, 
%\]

%Define an estimate \(\hat{f}_N\) of \(f\)  as:

%\begin{equation}\label{dens_bsqi}
%    \hat{f}_N(\cdot) = \sum_{j=-d}^{N-1}\lambda_j(f)B_{j,d}(\cdot).
%\end{equation}

\begin{theorem}\label{th:MSE}
   Let $X_1, \ldots, X_n$ denote i.i.d. observations having a PDF $f(x)\in C^1[a,b]$, $f$ and  $f^{\prime}$ bounded, and let $B_{j}(x)$ denote the $j-th$, $2$-nd degree B-spline basis. Let $x\in [a,b]$ and  let $\hat{f_n}(x)$ be as in (\ref{dens_bsqi}), with  uniform mesh defined in $[a,b]$ choosing a constant $h$ such that  as $n \to \infty$, as $nh \to \infty$ and as $h \to 0$, then  $\hat{f}_n$ is a uniformly consistent estimator of $f$.

   %\[
   % \text{MSE}(\hat{f}_n(x)) = \text{Bias}^2(\hat{f}_n(x)) + \text{Var}(\hat{f}_n(x))  \leq h^2\left(\sup_{\xi \in I_{x,h}}|f'(\xi)|\right)^2 + \frac{1}{nh}\sup_{X \in \mathbb{R}} |f(X)| + \frac{1}{n}\sup_{X \in \mathbb{R}} (f(X))^2. %\frac{h^4}{4}f''(x)^2I^2 +o(h^4) +\frac{1}{nh}I_2 f(x) + O(h)
   % \]

%where $nh = O\left(h^4 + (nh)^{-1}\right)$.

\end{theorem}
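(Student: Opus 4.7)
The plan is to exploit the identification provided by the preceding Lemma: each coefficient $\lambda_j$ with $j=-1,\ldots,N-2$ equals the naive kernel estimator $\hat f_K(c_j)$ at the Greville-type abscissa $c_j=x_{j+1}+h/2$, while the two boundary coefficients $\lambda_{-2}$ and $\lambda_{N-1}$ are just repetitions of their neighbours. Once this identification is in hand, the theorem reduces to combining standard concentration estimates for the naive KDE at the grid points $c_j$ with the well-known approximation power of the quadratic Schoenberg-type B-spline quasi-interpolant.

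First I would split the error into a stochastic part and a deterministic bias part,
\[
\hat f_n(x)-f(x)=\bigl(\hat f_n(x)-\mathbb{E}\hat f_n(x)\bigr)+\bigl(\mathbb{E}\hat f_n(x)-f(x)\bigr),
\]
and bound each piece uniformly in $x\in[a,b]$. Because the quadratic B-splines are nonnegative and form a partition of unity on $[a,b]$, the stochastic part satisfies
\[
\sup_{x\in[a,b]}\bigl|\hat f_n(x)-\mathbb{E}\hat f_n(x)\bigr|\le \max_{j}\bigl|\lambda_j-\mathbb{E}\lambda_j\bigr|,
\]
so the problem becomes a uniform control of the naive KDE over the $O(1/h)$ abscissae $c_j$. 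Each $\lambda_j$ is an average of i.i.d.\ bounded random variables with variance of order $(nh)^{-1}$ and sup-norm of order $1/h$, hence a Bernstein inequality together with a union bound yields $\max_j|\lambda_j-\mathbb{E}\lambda_j|\to 0$ in probability under the hypotheses $h\to 0$ and $nh\to\infty$.

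For the bias I would decompose further,
\[
\mathbb{E}\hat f_n(x)-f(x)=\sum_j \bigl(\mathbb{E}\lambda_j-f(c_j)\bigr)B_j(x)+\Bigl(\sum_j f(c_j)B_j(x)-f(x)\Bigr).
\]
Since $\mathbb{E}\lambda_j=h^{-1}\!\int_{c_j-h/2}^{c_j+h/2}f(y)\,dy$ and $f'$ is bounded, a Taylor expansion at $c_j$ together with the partition of unity bounds the first sum uniformly by $\|f'\|_\infty h/2$. The second sum is the Schoenberg-type quasi-interpolation error at the Greville abscissae; because that operator reproduces affine functions it is uniformly $O(h)$ on $[a,b]$ for $f\in C^1[a,b]$ (the boundary modification only changes two coefficients and shifts them by at most the modulus of continuity of $f$ at scale $h$, which does not spoil the order). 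Thus the bias tends to zero uniformly as $h\to 0$.

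The main obstacle is the uniform stochastic estimate. A plain Hoeffding bound, which uses only the sup-norm $\|\hat f_K\|_\infty\le 1/h$, would force the strictly stronger condition $nh^2\to\infty$ in order to survive the union bound over $N\sim 1/h$ grid points, so one really needs the variance-aware Bernstein estimate in order to absorb the logarithm coming from the union bound into the growth of $nh$. Granted the two uniform bounds above, the triangle inequality then gives $\sup_{x\in[a,b]}|\hat f_n(x)-f(x)|\to 0$, which is the uniform consistency claimed.
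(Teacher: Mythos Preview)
Your route differs from the paper's. The paper does not attempt to control $\sup_x|\hat f_n(x)-f(x)|$ directly; instead it establishes a uniform-in-$x$ bound on the pointwise mean squared error. Using the kernel representation of the $\lambda_j$ and the partition of unity $\sum_j B_j\equiv 1$, it shows
\[
\bigl|\mathrm{Bias}(\hat f_n(x))\bigr|\le h\sup|f'|,\qquad \mathrm{Var}(\hat f_n(x))\le \frac{\sup|f|}{nh}+\frac{(\sup|f|)^2}{n},
\]
both bounds independent of $x$, and declares the theorem proved because the resulting MSE bound tends to $0$ under the stated hypotheses. No concentration inequalities and no union bounds appear; the paper's ``uniform consistency'' is uniform convergence of the pointwise MSE, not $\sup_x|\hat f_n-f|\to 0$ in probability. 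Your two-step bias decomposition (naive-KDE bias at $c_j$ plus Schoenberg error) is correct and delivers the same $O(h)$ order, but the paper reaches it in a single stroke by writing $E\hat f_n(x)-f(x)=\sum_j B_j(x)\,h^{-1}\!\int (f(X)-f(x))K\!\bigl((X-c_j)/h\bigr)\,dX$ and bounding the integrand on its support.

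Your stochastic step, by contrast, aims at the genuinely stronger conclusion $\sup_x|\hat f_n(x)-f(x)|\to 0$ in probability, and there the argument has a gap. Bernstein gives $P\bigl(|\lambda_j-E\lambda_j|>\epsilon\bigr)\le 2\exp(-c_\epsilon\, nh)$ with $c_\epsilon=\tfrac{\epsilon^2/2}{\sup|f|+\epsilon/3}$, and the union bound over $N\sim 1/h$ coefficients multiplies this by $O(1/h)$. For the product to vanish you need $c_\epsilon\, nh-\log(1/h)\to\infty$; since $c_\epsilon$ can be made arbitrarily small by taking $\epsilon$ small, this forces $nh/\log(1/h)\to\infty$, which is strictly stronger than the stated hypothesis $nh\to\infty$ (for instance $h=(\log n)/n$ gives $nh=\log n\to\infty$ but $nh/\log(1/h)\to 1$). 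So under the exact assumptions of the theorem your argument does not close. If you are content with the paper's reading of ``uniformly consistent''---a uniform pointwise MSE bound---then the Bernstein machinery is unnecessary: your partition-of-unity reduction together with the elementary moment bound $\mathrm{Var}(\lambda_j)\le \sup|f|/(nh)$ already suffices, and that is essentially what the paper does.
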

\begin{proof}
To prove the pointwise consistency of $\hat{f}_n$ it is necessary to show that the $MSE (\hat{f}_n(x)) \rightarrow 0$  as $n \to \infty$, as $nh \to \infty$ and as $h \to 0$. In the following we use the well-known "Bias-Variance"  formulation: 

\[MSE (\hat{f}_n(x)) \equiv E\left[\left|\hat{f}_n(x) - f(x)\right|^2\right] = \text{Var}(\hat{f}_n(x)) + \left[E(\hat{f}_n(x)) - f(x)\right]^2 \equiv \text{Var}(\hat{f}_n(x)) + \text{Bias}^2(\hat{f}_n(x)).
\]
We start considering the absolute value of the Bias:
\[
\begin{split}
\left|\text{Bias}(\hat{f}_n(x))\right|&= \left|E\left(\frac{1}{nh} \sum_{j} \lambda_j B_{j}(x) \right)- f(x)\right| \\
&= \left|E\left(\frac{1}{nh} \sum_{j}\sum_{i=1}^{n} K\left(\frac{X_i - c_j}{h}\right)B_{j}(x)\right)- f(x) \right|\\
&= \left|\frac{1}{h}\sum_j B_{j}(x) E\left(K\left(\frac{X - c_j}{h}\right)\right)- f(x)\right|  \\
&=  \left|\frac{1}{h}\sum_j B_{j}(x)\left(\int f(X) K\left(\frac{X- c_j}{h}\right)dX\right) - f(x)\left(\int\frac{1}{h}\sum_{j} K\left(\frac{X - c_j}{h}\right)B_{j}(x) dX\right) \right|
\end{split}
\]
\text{since} $\displaystyle{\int\frac{1}{h}\sum_{j} K\left(\frac{X - c_j}{h}\right)B_{j}(x) dX=1 } $ \mbox{ and  denoting with  } $I_{x,h}=[x_{j+1},x_{j+2}] = supp\{B_j(x)\}\cap supp\{K\left( \frac{X-c_j}{h}\right)\}$ \mbox{ we have, }
\\

\[
\begin{split}
 &=  \left|\frac{1}{h}\sum_j B_{j}(x)\left(\int f(X)K\left(\frac{X- c_j}{h}\right)dX\right) - f(x)\frac{1}{h}\sum_{j}B_{j}(x)\int K\left(\frac{X - c_j}{h}\right) dX \right|\\ 
 &=\left|\frac{1}{h}\sum_j B_{j}(x)\left(\int (f(X)-f(x))K\left(\frac{X- c_j}{h}\right)dX\right)\right|\\% - f(x)\frac{1}{h}\sum_{j}B_{j}(x)\int K\left(\frac{X - c_j}{h}\right) dX \right|
&\leq \sup_{X\in I_{x,h}}|(f(X)-f(x)| \frac{1}{h} \sum_jB_{j}(x) \int K\left(\frac{X - c_j}{h}\right) dX =\sup_{X\in I_{x,h}}|f(X)-f(x)|\\
&\leq \sup_{\xi\in I_{x,h}}|f'(\xi)|h\\
\end{split}
\]
 therefore 
\[
\text{Bias}^2\left(\hat{f}_n(x)\right)\leq h^2\left(\sup_{\xi \in I_{x,h}}|f'(\xi)|\right)^2.
\]

\begin{comment}
\[
    \begin{split}
\text{Bias}(\hat{f}_n(x)) &= E\left[\frac{1}{nh} \sum_{j}\sum_{i=1}^{n} k\left(\frac{X_i - x}{h}\right)B_{j,d}(x)\right] - f(x) \\
&= \frac{1}{h}\sum_j B_{j,d}(x) E\left[k\left(\frac{X - x_j}{h}\right)\right]- f(x)  \\
&=  \frac{1}{h}\sum_j B_{j,d}(x)\left(\int f(X) K \left(\frac{X- x_j}{h}\right)dX\right) - f(x)\\
&\text{by change of variable} \quad X=sh+x \quad\text{and with Taylor expansion, we obtain}\\
&= \int \left( f(x) + f'(x)sh + f''(x)\frac{h^2}{2}s^2 + o(h^2)\right)k(s)ds-f(x)\\
&= f(x) + \int f''(x)\frac{h^2}{2}s^2 k(s) + o(h^2)ds-f(x)=\frac{h^2}{2}f''(x)\int s^2k(s)ds +o(h^2)\\
&= \frac{h^2}{2}f''(x)I +o(h^2) \quad \text{where} I = \int s^2k(s)ds
\end{split}
\]
\end{comment}
Considering now the variance we deduce that:
\[
\begin{split}
\text{Var}(\hat{f}_n(x)) &= \text{Var}\left(\sum_{j} \lambda_j B_{j}(x)\right)= \\
& = \text{Var}\left(\frac{1}{nh} \sum_{j}\sum_{i=1}^{n} K\left(\frac{X_i - c_j}{h}\right)B_{j}(x)\right)= \\
&=\mathbf{E}\left[\left(\frac{1}{nh} \sum_{j}\sum_{i=1}^{n} K\left(\frac{X_i - c_j }{h}\right)B_{j}(x)\right)^2\right]-
\left(\mathbf{E}\left[\frac{1}{nh} \sum_{j}\sum_{i=1}^{n} K\left(\frac{X_i - c_j}{h}\right)B_{j}(x)\right]\right)^2\\
&=\frac{1}{nh^2}\sum_j\sum_zB_{j}(x)B_{z}(x)\int K\left(\frac{X - c_j}{h}\right)K\left(\frac{X - c_z}{h}\right)f(X)dX-\\
&\frac{1}{nh^2}\sum_j\sum_zB_{j}(x)B_{z}(x)\int K\left(\frac{X - c_j}{h}\right)f(X)dX\int K\left(\frac{X - c_z}{h}\right)f(X)dX\\
&=\frac{1}{nh^2} \sum_j\sum_zB_{j}(x)B_{z}(x)\\
&\left( \int K\left(\frac{X - c_j}{h}\right)K\left(\frac{X - c_z}{h}\right)f(X)dX-\int K\left(\frac{X- c_j}{h}\right)f(X)dX\int K\left(\frac{X- c_z}{h}\right)f(X)dX\right)\\
\end{split}
\]

since the kernel $K$ is evaluated at the mesh points it is easy to prove that this quantity is bounded and  $$  h  =\int K\left(\frac{X - c_j}{h}\right)K\left(\frac{X - c_z}{h}\right)dX, $$\\
hence,
\[
\begin{split}
\text{Var}(\hat{f}_n(x)) = &\leq\frac{1}{nh^2} \sum_j\sum_zB_{j}(x)B_{z}(x)
\left(\sup_{X \in \mathbb{R}} |f(X)| h +\sup_{X \in \mathbb{R}} (f(X))^2\left( \int K\left(\frac{X- c_j}{h}\right)dX\right)\left(\int K\left(\frac{X- c_z}{h}\right)dX\right)\right)\leq\\
&\leq\frac{1}{nh^2} \sum_j\sum_zB_{j}(x)B_{z}(x)\left(\sup_{X \in \mathbb{R}} |f(X)| h+\sup_{X \in \mathbb{R}} (f(X))^2h^2\right)\\
&=\frac{1}{nh}\left(\sup_{X \in \mathbb{R}} |f(X)| +\sup_{X \in \mathbb{R}} (f(X))^2h\right)\left(\sum_jB_{j}(x)\right)\left(\sum_zB_{z}(x)\right)\\
&=\frac{1}{nh}\sup_{X \in \mathbb{R}} |f(X)| +\frac{1}{n}\sup_{X \in \mathbb{R}} (f(X))^2.
%=\frac{1}{nh}\sup_{X \in \mathbb{R}} |f(X)| +o\left(\frac{1}{nh}\right)\sup_{X \in \mathbb{R}} (f(X))^2.
\end{split}
\]
\end{proof}
\begin{comment}

&\text{by change of variable} \quad X=sh+x \quad\text{and with Taylor expansion, we obtain}\\
&=\frac{1}{nh} \sum_j\sum_zB_{j,d}(x)B_{z,d}(x) \left(h\int \left(f(x) + f'(x)hs\right)k^2(s)ds - O(h^2)\right)=\\
&= \frac{1}{nh} \sum_j\sum_zB_{j,d}(x)B_{z,d}(x)\left(f(x)\int k^2(s)ds + f'(x)h\int sk^2(s)ds\right) -O(h)=\\
&\frac{1}{nh}\sum_j\sum_zB_{j,d}(x)B_{z,d}(x) \left(f(x)\int k^2(s)ds + O\left(hf'(x)\int |s|k^2(s)ds\right) -O(h)\right)=  \\
&= \frac{1}{nh} \left(I_2 f(x) + O(h)\right)\quad \text{where} \quad I_2 = \int k^2(s)ds

\end{comment}
Then we have
%\begin{equation}
%\text{MSE}(\hat{f}(x)) = \text{Bias}^2(\hat{f}(x)) + \text{Var}(\hat{f}(x)) = \frac{h^4}{4}f''(x)^2I^2 +o(h^4) +\frac{1}{nh}I_2 f(x) + O(h)
%\end{equation}
\begin{equation}
\text{MSE}(\hat{f}_n(x)) =\text{Bias}^2(\hat{f}_n(x)) + \text{Var}(\hat{f}_n(x)) \leq h^2\left(\sup_{\xi \in I_{x,h}}|f'(\xi)|\right)^2 + \frac{1}{nh}\sup_{X \in \mathbb{R}} |f(X)| +\frac{1}{n}\sup_{X \in \mathbb{R}} (f(X))^2.
\end{equation}
This quantity will tend to zero as as $n \rightarrow \infty$, as $nh \rightarrow \infty$, and as $h \rightarrow 0$.  

Since $\displaystyle{\sup_{\xi \in I_{x,h}}|f'(\xi)|\leq S_1} :=\displaystyle{ \sup_{\xi\in [a,b]} |f^\prime(\xi)|}$, denoting by $\displaystyle{S_0 := \sup_{X\in [a,b]}|f(X)|}$, then, the upper bound for the MSE can be written as:
$$
MSE(\hat{f}_n(x))\leq \text{Bias}^2(\hat{f}_n(x)) + \text{Var}(\hat{f}_n(x)) \leq h^2(S_1)^2 + \frac{1}{nh}S_0 + \frac{1}{n}S_0^2.
$$

The above upper bound does not depend on $x$ as so the uniformly consistency is proved. \qed

%In this case, the asymptotic mean square error (AMSE) is given by
%\[
%\text{AMSE}=h^2(S^\prime)^2+\frac{1}{nh}S 
%\]
In the BSHQI density estimation, the bandwidth $h$ can be freely chosen as long as the assumption of Theorem \ref{th:MSE} are satisfied.  Thus, the optimal bandwidth $h$ can be chosen by minimizing the MSE neglecting the smallest term $\frac{1}{n}S_0^2$:

\begin{equation}
h_{\text{opt}}(x) := \left(\frac{1}{2n} \frac{S_0}{(S_1)^2}\right)^{\frac{1}{3}}\sim n^{-1/3}.%{\frac{f(x_o)}{|f''(x_0)|^2}}\frac{I_2}{I^2}\right)^{\frac{1}{5}}
\end{equation}
This choice for the smoothing bandwidth leads to a MSE at the rate

\begin{equation}
\text{MSE}_{\text{opt}}(\hat{f}_n(x)) = O(n^{-\frac{2}{3}}).
\end{equation}
In the previous analysis, our focus was solely on a single point, $x$. However, in a broader context, our goal is to manage the overall MSE  for every point. In such cases, a straightforward extension is the mean integrated square error (MISE) of $\hat{f}_n(x)$. We have the following corollary:

%\begin{corollario}
 %Let $f$ be a probability density function on $\mathbb{R}$, where $f\in C^1(\mathbb{R})$,and $f$ and $f'$ are bounded. Then, with $\hat{f}_n$ as defined above, if $h \to 0$ and $nh \to \infty$ as $n \to \infty$, then
%\[ \text{MSE}(\hat{f}_n(x))  \to 0, \] uniformly as $n\to\infty$. Thus $\hat{f}_n$ is a uniformly consistent estimator of $f$.
%\end{corollario}
\begin{corollario}
 Let $f$ be a probability density function on $\mathbb{R}$, and let $A \subseteq \mathbb{R}$ be an open region with $A = \{ x \mid f(x) \neq 0 \}$, where $f \in C^1(\mathbb{R})$, and both $f$ and $f'$ are bounded. If $A$ is contained in a closed and bounded region, then, with $\hat{f}_n$ as defined above, if $h \to 0$ and $nh \to \infty$ as $n \to \infty$, then
\[ \text{MISE}(\hat{f}_n) = \int \text{MSE}(\hat{f}_n(x)) \, dx= \int \mathbf{E}\left(\hat{f}_n(x) - f(x) \right)^2dx \to 0, \]
as $n \to \infty$, i.e., $\hat{f}_n$ is a consistent estimator of $f$ in mean integrated squared error (MISE).

\end{corollario}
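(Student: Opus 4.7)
The plan is to lift the pointwise bound established in Theorem~\ref{th:MSE} to an integrated bound over the support of $f$, exploiting the fact that the upper bound on $\mathrm{MSE}(\hat f_n(x))$ derived there is \emph{uniform} in $x$. First I would observe that, by construction, $\hat{f}_n(x)=0$ for $x\notin[a,b]$ and, since the data live in $[a,b]$ and $A=\{f\neq 0\}$ is the support of $f$, we may assume without loss of generality that $A\subseteq [a,b]$. Consequently, for $x$ outside the closed bounded region containing $A$ (call it $B$, with $A\subseteq B\subseteq[a,b]$) both $f(x)$ and $\hat f_n(x)$ vanish, and so $\mathrm{MSE}(\hat f_n(x))=0$ there. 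This reduces the MISE integral to an integral over the bounded set $B$.

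Next I would apply directly the uniform bound proved at the end of Theorem~\ref{th:MSE},
$$
\mathrm{MSE}(\hat f_n(x)) \;\leq\; h^2 S_1^2 + \frac{1}{nh}S_0 + \frac{1}{n}S_0^2,
$$
which holds for every $x\in[a,b]$ under the assumptions that $f\in C^1[a,b]$ with $f,f'$ bounded (hence $S_0,S_1<\infty$). Integrating this constant-in-$x$ bound over $B$ yields
$$
\mathrm{MISE}(\hat f_n) \;=\; \int_B \mathrm{MSE}(\hat f_n(x))\,dx \;\leq\; |B|\left( h^2 S_1^2 + \frac{S_0}{nh} + \frac{S_0^2}{n}\right),
$$
where $|B|<\infty$ because $B$ is bounded.

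Finally, I would let $n\to\infty$ together with the prescribed regimes $h\to 0$ and $nh\to\infty$: the first term $h^2 S_1^2$ vanishes because $h\to 0$, the second term $S_0/(nh)$ vanishes because $nh\to\infty$, and the third term $S_0^2/n$ vanishes because $n\to\infty$. Hence $\mathrm{MISE}(\hat f_n)\to 0$, establishing consistency in mean integrated squared error.

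The only subtle point, and the step I would write out most carefully, is the reduction of the integration domain: one must justify that the MSE vanishes (or is at least integrable with negligible contribution) outside $B$. This is immediate here because both $f$ and $\hat f_n$ are identically zero off $[a,b]$, so the argument is genuinely clean; the rest is a direct integration of the already-uniform pointwise bound obtained in Theorem~\ref{th:MSE}.
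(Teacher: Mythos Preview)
Your approach is correct and is exactly the reasoning the paper intends: the corollary is stated without a separate proof precisely because it follows by integrating the uniform-in-$x$ bound $\mathrm{MSE}(\hat f_n(x))\le h^2 S_1^2 + S_0/(nh) + S_0^2/n$ from Theorem~\ref{th:MSE} over a bounded interval. One small imprecision to tighten: $\hat f_n$ is only guaranteed to vanish outside $[a,b]$, not outside a possibly smaller $B\subsetneq[a,b]$, so the clean reduction is to take the integration domain to be $[a,b]$ itself (which is a closed bounded region containing $A$), giving $\mathrm{MISE}(\hat f_n)\le (b-a)\big(h^2 S_1^2 + S_0/(nh) + S_0^2/n\big)\to 0$.
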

The above evidence concludes the investigation of the consistency results of the B-spline estimator with the proposed approach. However, in this context, how to choose $h$ is an unsolved problem in statistics known as \textit{bandwidth selection}. Most bandwidth selection approaches either suggest an estimate of AMISE and then aim to minimize the estimated AMISE. For more details, we refer to \cite{spline_richard99}.
\begin{comment}

\begin{itemize}
	\item a completely parametric way in which the parameters of the copula and the parameters of the marginal probability densities will be estimated, in this case there are two approach, Inference for marginal (joe), and ECM (cite);
	\item a semiparametric way in which the density estimation of marginals is done by  empirical way using spline interpolation and only the parameters relating to the copulae will remain to be evaluated.
\end{itemize}  
\end{comment}

\begin{oss}
Given the weighted CDF
\begin{equation}\label{eq:weightedCDF}
F_{{h,w}}(x_j) = {\frac{1}{\sum_{i=1}^n w_i}} \sum_{i=1}^n w_i I(X_i \leq x_j),  \qquad  j=0,\ldots, N,
\end{equation}  
with $w_i$  a weight associated to the corresponding observation $X_i$, then the results of Theorem \ref{th:density} and Theorem \ref{th:MSE} continue to hold.
\end{oss}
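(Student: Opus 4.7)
\begin{pf}
My plan is to show that introducing weights only changes constants, not the structural arguments used in Theorems \ref{th:density} and \ref{th:MSE}, provided mild regularity assumptions on the weights hold (nonnegativity and a growth condition on the effective sample size $n_{\text{eff}} := \left(\sum_i w_i\right)^2 / \sum_i w_i^2$, which coincides with $n$ when $w_i\equiv 1$).

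First I would verify that $F_{h,w}$ retains the three structural properties that drove the proof of Theorem \ref{th:density}: it is (i) nondecreasing in $j$, (ii) equal to $0$ at $x_0=a$ and to $1$ at $x_N=b$ (since $\sum_i w_i I(X_i \leq x_N)=\sum_i w_i$ because all $X_i\in[a,b]$), and (iii) substitutable into \eqref{approx_first_cdf_der}--\eqref{approx_second_cdf_der} to produce the same cancellation that yields \eqref{eq:coefL}. These three properties are the only facts about $F_h$ used in the proof of Theorem \ref{th:density}: (i) implies $\lambda_j\geq 0$ and hence $\hat f\geq 0$; (ii) makes the telescoping sum in part (b) collapse to $-F_{h,w}(a)+F_{h,w}(b)=1$; and (iii), together with the fact that $\hat f$ is a B-spline of degree $2$, gives the $C^1$ regularity in (c) independently of the coefficient values. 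So Theorem \ref{th:density} carries over with the symbol $F_h$ replaced by $F_{h,w}$ throughout.

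Next I would repeat the Lemma's identification of $\lambda_j$ with a kernel estimator, which in the weighted case reads
\[
\lambda_j \;=\; \frac{1}{h\sum_i w_i}\sum_{i=1}^n w_i\, K\!\left(\frac{X_i-c_j}{h}\right),\qquad j=-1,\ldots,N-2,
\]
with $\lambda_{-2}=\lambda_{-1}$ and $\lambda_{N-1}=\lambda_{N-2}$ as before. Because the $X_i$ are i.i.d., the weights factor out of the expectation: $\mathbf{E}[\lambda_j] = h^{-1}\,\mathbf{E}[K((X-c_j)/h)]$, which is \emph{identical} to the unweighted case. Consequently, the bias bound
\[
|\text{Bias}(\hat f_n(x))|\leq \sup_{\xi\in I_{x,h}}|f'(\xi)|\,h
\]
derived in the proof of Theorem \ref{th:MSE} transfers verbatim, since it depends only on $\mathbf{E}[\lambda_j]$.

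The one place where the weights genuinely appear is the variance computation, where the independence of the $X_i$ gives a prefactor $\sum_i w_i^2/(\sum_i w_i)^2 = 1/n_{\text{eff}}$ in place of $1/n$. Reproducing the same bounding steps as in Theorem \ref{th:MSE} then yields
\[
\text{Var}(\hat f_n(x))\;\leq\; \frac{1}{n_{\text{eff}}\,h}\sup_{X\in\mathbb R}|f(X)| \;+\; \frac{1}{n_{\text{eff}}}\sup_{X\in\mathbb R}f(X)^2,
\]
so the MSE bound becomes $h^2 S_1^2 + (n_{\text{eff}} h)^{-1}S_0 + n_{\text{eff}}^{-1} S_0^2$, tending to zero under the natural analog of the original hypotheses, namely $h\to 0$ and $n_{\text{eff}} h\to\infty$. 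The main (and essentially the only) obstacle is this variance step: one must verify that the weights do not collapse the effective sample size, which is automatic when the $w_i$ are uniformly bounded and bounded away from zero, and is in any case the correct generalization of the condition $nh\to\infty$ appearing in Theorem \ref{th:MSE}.
\qed
\end{pf}
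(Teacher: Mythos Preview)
The paper states this observation without proof, so there is no argument to compare against; your proposal correctly supplies the missing details. The structural check for Theorem~\ref{th:density} (monotonicity of $F_{h,w}$ from $w_i\ge 0$, the boundary values $F_{h,w}(a)=0$, $F_{h,w}(b)=1$, and the coefficient-independent $C^1$ regularity) is exactly what is needed, and your treatment of Theorem~\ref{th:MSE}---bias unchanged because the weights cancel in the expectation, variance rescaled by $1/n_{\text{eff}}$ in place of $1/n$---is the right computation. You are in fact more careful than the paper: the observation as stated omits the hypotheses $w_i\ge 0$ and $n_{\text{eff}}\,h\to\infty$ that you correctly identify as the natural replacements for the unweighted assumptions, and without which the claim would not hold (e.g., a single nonzero weight would make the variance fail to vanish).
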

\begin{oss}
Note that the continuous CDF $\hat{F}$ is then computed by integrating the density $\hat{f}$ in Equation \eqref{dens_bsqi}.
\end{oss}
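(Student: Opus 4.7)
The plan is to define the continuous CDF explicitly by antidifferentiation and to verify, using the three properties already established in Theorem \ref{th:density}, that the resulting function satisfies the defining properties of a cumulative distribution function. Concretely, I would set
\[
\hat{F}(x) := \int_{a}^{x} \hat{f}(t)\,dt, \quad x \in [a,b],
\]
extended by $\hat{F}(x) = 0$ for $x < a$ and $\hat{F}(x) = 1$ for $x > b$. Substituting the B-spline expansion \eqref{dens_bsqi} and exchanging the finite sum with the integral, one obtains
\[
\hat{F}(x) = \sum_{j=-d}^{N-1} \lambda_j(f)\, \int_{a}^{x} B_{j}(t)\,dt,
\]
so $\hat{F}$ is a closed-form piecewise polynomial: each integral of a B-spline of degree $d=2$ admits the standard recursive representation as a sum of (normalized) B-splines of degree $d+1 = 3$, so $\hat{F}$ inherits $C^{2}[a,b]$ regularity and, in particular, continuity.

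Next I would check the three CDF axioms in order. Monotonicity of $\hat{F}$ on $[a,b]$ is immediate from item (a) of Theorem \ref{th:density}, since $\hat{F}'(x) = \hat{f}(x) \ge 0$; the flat extensions outside $[a,b]$ preserve monotonicity globally. The correct limits $\lim_{x\to -\infty}\hat{F}(x) = 0$ and $\lim_{x\to +\infty}\hat{F}(x) = 1$ follow from the convention on $(-\infty,a)\cup(b,+\infty)$ together with item (b) of Theorem \ref{th:density}, which gives $\int_a^b \hat{f}(t)\,dt = 1$. Continuity at the junctions $x=a$ and $x=b$ is then automatic, because the telescoping computation in the proof of Theorem \ref{th:density}(b) already yields $\hat{F}(a) = 0$ and $\hat{F}(b) = 1$ exactly, so no jump is introduced when gluing with the constant extensions.

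The only mildly delicate step is matching the boundary B-spline integrals to the endpoint mass correctly, i.e.\ verifying that the coincident auxiliary knots at $a$ and $b$ give the partial integrals $h/3$, $2h/3$, $h$, $\dots$, $h$, $2h/3$, $h/3$ used in the telescoping sum. This is however exactly the computation displayed inside the proof of Theorem \ref{th:density}(b); invoking it directly discharges the claim and confirms that $\hat{F}$ is genuinely a continuous CDF defined everywhere on $\mathbb{R}$.
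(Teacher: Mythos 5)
Your proposal is correct, and there is nothing in the paper to diverge from: the paper states this observation as a bare construction remark with no proof, so your verification is the natural fleshing-out, using exactly the paper's own ingredients (items (a)--(c) of Theorem \ref{th:density} and the B-spline integral values $h/3$, $2h/3$, $h$ from its proof). The only cosmetic slip is that $\hat{F}(a)=0$ follows trivially from $\int_a^a \hat{f}(t)\,dt = 0$ rather than from the telescoping sum, which is what yields $\hat{F}(b)=1$; this does not affect the argument.
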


\subsection{Statistical Tests for marginals fitting with BSHQI spline}\label{sub:qispline_test}
In this subsection we compare the BSHQI density estimation  with 
the classical empirical approach constructed by using the Gaussian Kernel Density function. The careful selection of  the number of bins is of crucial importance in approximating density, as it directly influences the accuracy and the visual representation of the data distribution pattern. For both procedures considered in this work, it is possible to select different criteria for choosing the  number of bins, in particular we consider, the so called \textit{Rice’s Rule} \cite{wand}, where the number of bins is equal to $2 \times \lceil n^{1/3}\rceil$.
%\begin{itemize}
    %\item \textbf{Scott’s Rule}, bin width $h$ is based on the standard deviation ($\sigma$) of the data. The formula is: $h=3.49 \cdot \sigma \cdot n^{-1/3}$.
   % \item \textbf{Rice’s Rule} \cite{wand}, the number of bins is defined as: $2 \times n^{1/3}$ . 
    %\item \textbf{Freedman-Diaconis’s Rule}, this formula uses the interquartile range (IQR): $h=2 \cdot \text{IQR} \cdot n^{-1/3}$.
    %\item \textbf{Knuth's Rule}, is a fixed-width, Bayesian approach to determining the optimal bin width of a histogram.
    %\item \textbf{Improved Sheather-Jones Rule}, attempts to find $h$ to minimize the asymptotic mean integrated square error (AMISE)
  %  \end{itemize}

 There is no single optimal criterion for selecting the most suitable bins. For the experiments conducted in this work, unless otherwise indicated, we will use the Rice rule.
 To assess the goodness of the produced model, we conduct two statistical tests: the Kolmogorov-Smirnov (KS) Test \cite{kolmogorov_1951} and the Cramér-von Mises (CvM) Test \cite{Cramer_1962}. Additionally, we show the error in terms of Average Mean Integrated Squared Error (AMISE) and Root Mean Square Error (RMSE) for the computed probability density functions.

We perform the tests on three different distributions: a normal distribution $X\sim \mathcal{N}(\mu,\sigma^2)$ with $\mu=5$ and $\sigma^2=0.3$, an exponential distribution $X \sim \text{Exp}(\lambda)$ with $\lambda=1$ and a third distribution consisting of a mixture of Gaussians with different means and variances.\\
\begin{figure}[htbp]
		\centering
		\subfigure[Pdf]{\includegraphics[width=6cm]{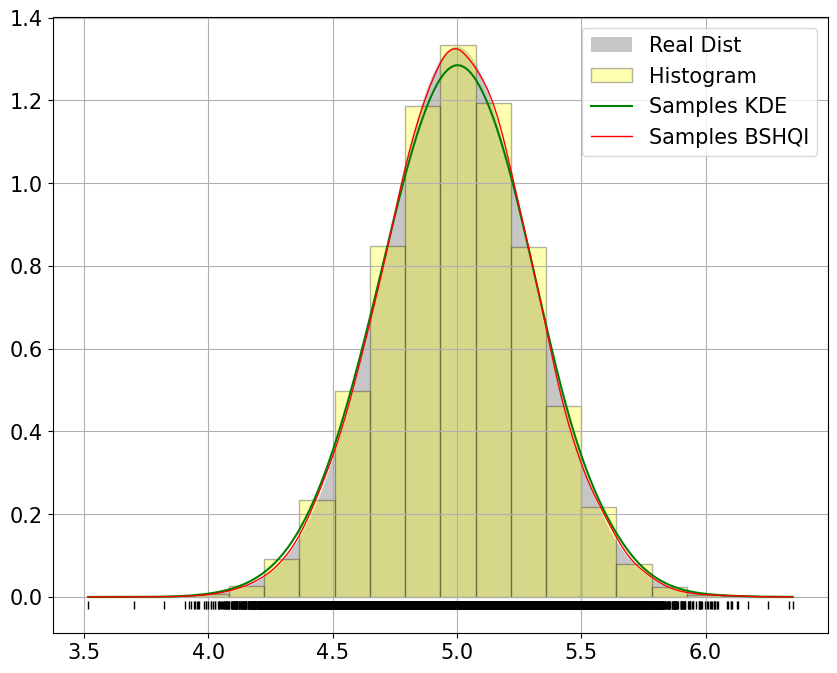}}\quad
		\subfigure[Cdf]{\includegraphics[width=6cm]{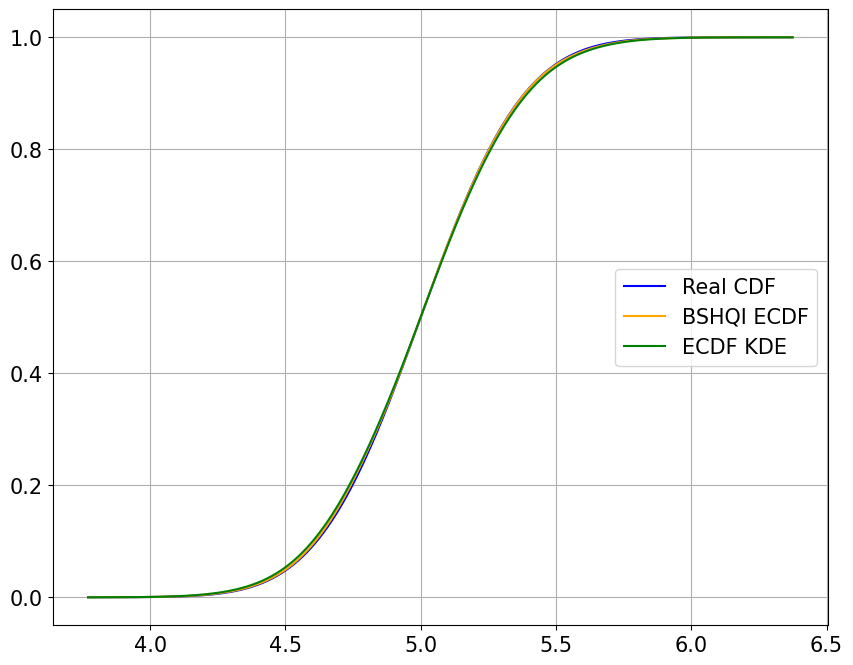}}\quad
  \caption{Comparison of samples generated from $X\sim \mathcal{N}(5,0.3)$ with the KDEpy and BSHQI method for probability density (a) and for the cumulative distribution (b).}
  \label{fig:stat_test_norm}
  \end{figure}
\begin{table}[htbp]
    \centering
    \begin{adjustbox}{max width=1\textwidth}
    \begin{tabular}{ccc|rrrr}
    \hline
    & AMISE &RMSE &\multicolumn{2}{c}{KS-Test} & \multicolumn{2}{c}{Cramér–von Mises} \\
     \hline
   & &       & \texttt{statistic} & \texttt{p-value} & \texttt{statistic} & \texttt{p-value} \\
     \hline
    \hline    &          &          &           &          &                    &          \\
 BSHQI     & 3.43e-06 & 1.13e-04 & 7.75e-03  & 2.77e-01 & 2.05e-01           & 2.58e-01 \\
 \hline    &          &          &           &          &                    &          \\
 KDEpy & 1.08e-05 & 3.53e-04 & 1.21e-02  & 1.58e-02 & 9.08e-01           & 4.05e-03 \\

\hline 

    \end{tabular}
    \end{adjustbox}
    \caption{Statistics ran on the results for Normal Density Estimation--- Rice's Rule for bins.}    
    \label{tab:NormTest}
\end{table}
All the numerical experiments are performed using Python 3.10 on a computing system equipped with Windows 11 operating system, 16 GB of RAM, and powered by an Intel(R) Core(TM) i7-9750H CPU @ 2.60GHz with a base clock speed of 2.59 GHz. The python package KDEpy\footnote{\url{https://kdepy.readthedocs.io/en/latest/introduction.html}} has been chosen for the comparison since, in our opinion, it is the most efficient  among the available Python routines for this task;  the Kernel density estimation is constructed following the theory in \cite{lauter1988silverman}. 
%For both statistical tests, we assess the goodness of fit of the actual probability density compared to the approximate density using the BSHQI technique and through the Kernel Density Estimation approach provided in the python package KDEpy\footnote{\url{https://kdepy.readthedocs.io/en/latest/introduction.html}}. The reason for choosing to use this package is due to its lower computational cost compared to that of other present kernel methods. 
\begin{figure}[htbp]
		\centering
		\subfigure[Pdf]{\includegraphics[width=6cm]{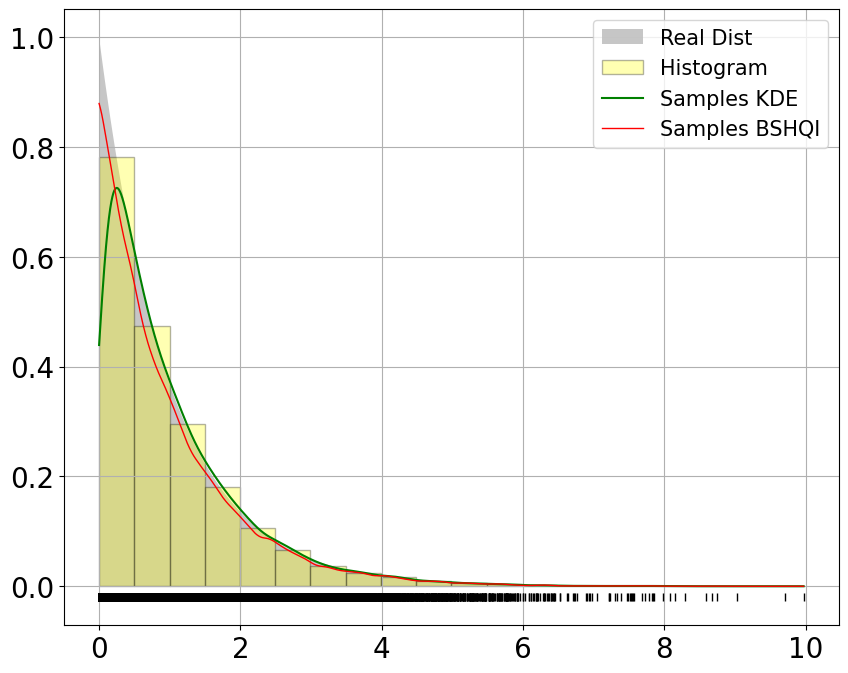}}\quad
		%\subfigure[Sample]{\includegraphics[width=5cm]{images/exp_sample.png}}\\
  \subfigure[Cdf]{\includegraphics[width=6cm]{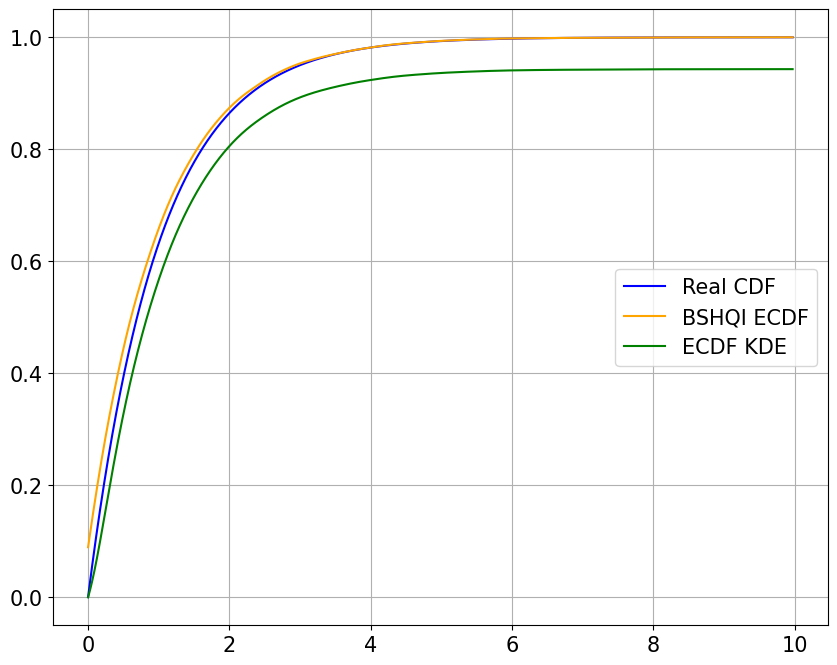}}
		\caption{Comparison of samples generated with the KDE and BSHQI method for probability density (a) and for the cumulative distribution (b)}
		\label{fig:stat_test_exp}
\end{figure}

\begin{table}[h]
    \centering
    \begin{adjustbox}{max width=1\textwidth}
    \begin{tabular}{ccc|rrrr}
    \hline
    & AMISE &RMSE &\multicolumn{2}{c}{KS-Test} & \multicolumn{2}{c}{Cramér–von Mises} \\
     \hline
   & &       & \texttt{statistic} & \texttt{p-value} & \texttt{statistic} & \texttt{p-value} \\
     \hline
    \hline    &          &          &           &          &                    &          \\
 EMP\_BSHQI     & 2.27e-06 & 2.96e-05 & 7.78e-03  & 2.73e-01 & 1.18e-01           & 5.02e-01 \\
 \hline    &          &          &           &          &                    &          \\
 EMP\_KDEpy & 1.68e-04 & 2.19e-03 & 6.96e-02  & 0 & 5.79e+01           & 2.01e-08 \\
\hline
\hline
    \end{tabular}
    \end{adjustbox}
    \caption{Statistics ran on the results for Exponential Density Estimation--- Rice's Rule for the bins.}   
    \label{tab:ExpTest}
\end{table}
For all the considered distributions, we generated a group of $n=2^{15}$ samplings for $20$  iterations. This iterative process allows us to calculate the AMISE, the RMSE, the values of both statistics and relative p-values, and we derive their average values as comprehensive measures of performance. Furthermore, we evaluated efficiency in terms of computational time by calculating the mean and standard deviation after the set number of iterations. Regarding the statistical tests, the null hypothesis states that the true underlying distribution and the empirical one are identical; the alternative hypothesis suggests that they are not. The used statistics is the maximum absolute difference between the exact values and the ones computed by the 
empirical distribution functions at the same  samples. If the KS or CvM  statistics are large, then the p-value will be small, and this may be taken as evidence against the null hypothesis in favor of the alternative.

By observing the Tables \ref{tab:NormTest}, \ref{tab:ExpTest}, \ref{tab:MixNormTest} it can be seen that the density approximation with BSHQI is preferable to the classical empirical evaluation of the distribution taken into account.
\begin{figure}[htbp]
\centering
  \subfigure[Pdf]{\includegraphics[width=6cm]{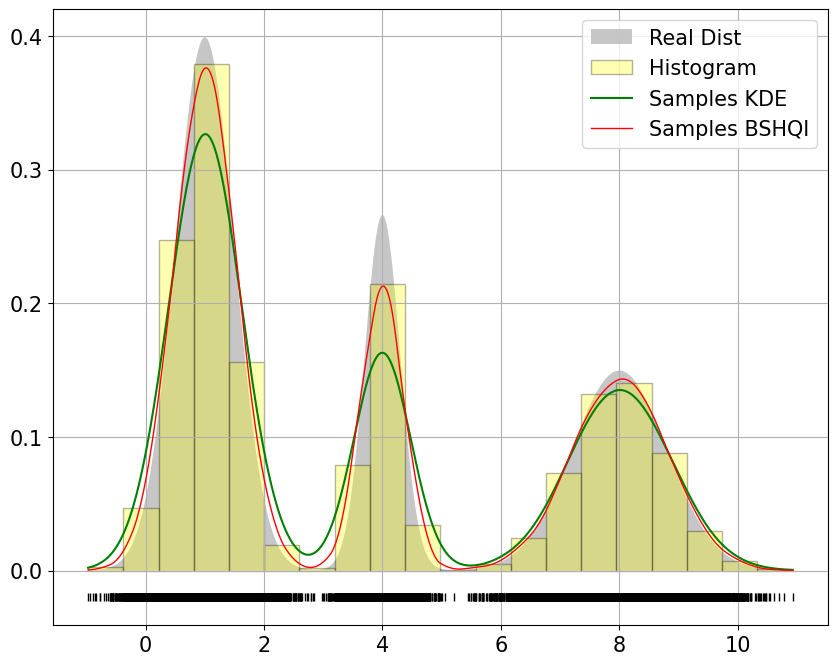}}\quad
  \subfigure[Cdf]{\includegraphics[width=6cm]{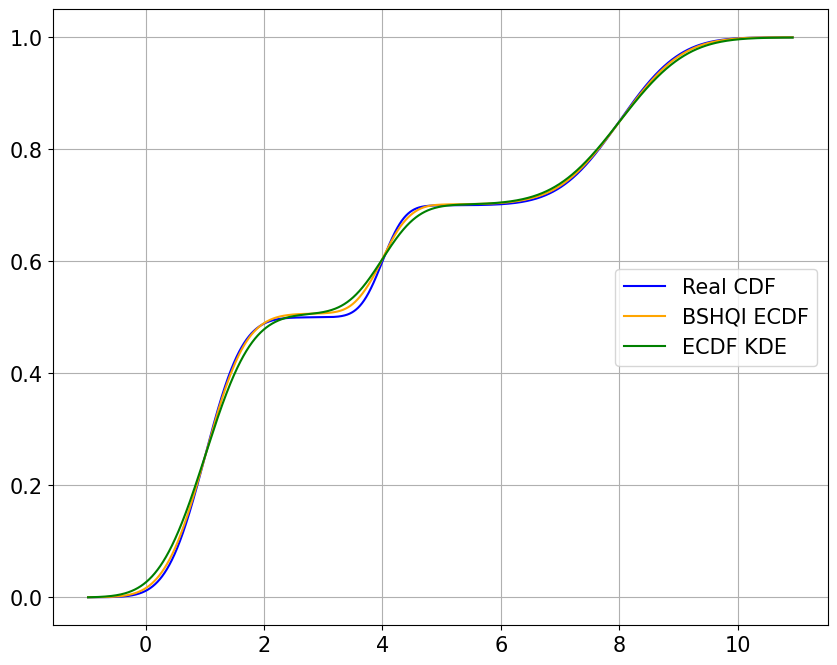}}\quad
		\caption{Comparison of samples generated with the KDE and BSHQI method for probability density (a) and for the cumulative distribution (b).}
		\label{fig:stat_test_mix}
\end{figure}

\begin{table}[htbp]
    \centering
    \begin{adjustbox}{max width=1\textwidth}
    \begin{tabular}{ccc|rrrr}
    \hline
    & AMISE &RMSE &\multicolumn{2}{c}{KS-Test} & \multicolumn{2}{c}{Cramér–von Mises} \\
     \hline
   & &       & \texttt{statistic} & \texttt{p-value} & \texttt{statistic} & \texttt{p-value} \\
     \hline
    \hline    &          &          &           &          &                    &          \\

 EMP\_BSHQI     & 1.84e-06 & 2.16e-05 & 4.91e-03  & 8.22e-01 & 8.09e-02           & 6.87e-01 \\
 \hline    &          &          &           &          &                    &          \\
 EMP\_KDEpy & 9.35e-06 & 1.10e-04 & 1.06e-02  & 5.04e-02 & 5.18e-01           & 3.58e-02 \\
\hline

    \end{tabular}
    \end{adjustbox}
    \caption{Statistics ran on the results for Mixture Gaussian Density Estimation--- Rice's Rule for the bins.}
  
     \label{tab:MixNormTest}
\end{table}

Indeed, comparing the p-values of both tests in each of the Tables \ref{tab:NormTest}, \ref{tab:ExpTest}, \ref{tab:MixNormTest}, allows us to accept the hypothesis when using the BSHQI, contrary to what can be concluded when referring to  KDEpy.

Moreover, for all three experiments, the AMISE and RMSE obtained with the proposed approach are lower compared to the AMISE and RMSE obtained using KDEpy. This can be observed in Figures \ref{fig:stat_test_norm}, \ref{fig:stat_test_mix}, and \ref{fig:stat_test_exp}, where the estimates of the PDF and the CDF for both methods are compared to the real distribution. Note that different results could be obtained by choosing a different rule for the number of bins. In particular, for the normal distribution in Figure \ref{fig:stat_test_norm} we can observe a good agreement for both the methodologies; for the exponential and the mixture ones in Figures \ref{fig:stat_test_mix}, and \ref{fig:stat_test_exp}, the result produced by BSHQI seems to better alienate with the original distribution while KDE seems to be less accurate as it always produce an under-estimate of the original distribution. The visual output is also supported by the results shown in Tables \ref{tab:NormTest},\ref{tab:ExpTest},\ref{tab:MixNormTest}: if on the one hand the reached accuracy seems similar in terms of AMISE and RMSE, on the other hand, the statistics conducted to test the validity of the null hypothesis clears out any doubt in assessing a better goodness of fit, under statistical point of view, of the BSHQI method. 
\begin{table}[htbp]
    \centering
    \begin{tabular}{lcc}
        \toprule
        \textbf{Algorithm} & \textbf{Mean Time (ms)} & \textbf{Standard Deviation (ms)} \\
        \midrule
        KDEpy & $0.0241$ & $\pm 0.0011$ \\
        BSHQI & $0.0124$ & $\pm 0.0009$ \\
        \bottomrule
    \end{tabular}
        \caption{Comparison of Execution Times for KDE and BSQH Algorithms}\label{tab:table_time}
\end{table}
In addition examining the results in Table \ref{tab:table_time}, it can be seen an advantage in terms of computational time when utilizing the BSHQI, compared to the alternative approach. While the difference is not substantial, it is sufficient to highlight the efficiency of our approach as the time is almost halved when using BSHQI.  
\section{Copulas Mixture Model}\label{sec:copMix}

In this section, we leverage the density estimation strategy introduced in Section \ref{sub:qispline} in the context of copulas. Our objective is to introduce the concept of copula and emphasize its profound connection to marginal distributions, with a specific focus on its application in formulating a novel clustering algorithm based on copula mixture.
Copulas functions are a useful tool employed to easily express multivariate distributions by specifying the marginals. In this Section the principal concepts are revised following the setting in \cite{durante2015principles}.

\begin{defn}\label{def:C}
	A D-dimensional copula is a CDF with
	uniform marginals: 
 $$C : [0, 1]^D \rightarrow [0, 1] \;\mbox {such that } C(u) = C(u_1,\ldots, u_D).$$
\end{defn}

\begin{theorem}{\textbf{(Sklar's Theorem)}}\label{Skltheorem}
Consider a D-dimensional CDF, $G$, with marginals $F_1,\ldots,F_D$. Then there exist a copula, $C$, such that
	\begin{equation}\label{eq:sklar}	G(x_1,\ldots,x_D)=C(F_1(x_1),\ldots,F_D(x_D))
	\end{equation}
	for all $x_i\in[-\infty,\infty]$ and $i=1,\ldots,D$.
	If $F_i$ is continuous for all $i=1,\ldots,D$ then $C$ is unique.
 %otherwise $C$ is uniquely determinated only on $Range(F_1)\times\cdots \times Range(F_d)$, which is the Cartesian product of the ranges of the marginals CDF's $F_j$.
 In the opposite direction, given a copula $C$ and univariate CDFs, $F_1,\ldots,F_D$, then $G$ as in \eqref{eq:sklar} is a multivariate CDF with marginals $F_1,\ldots ,F_D.$ 
\end{theorem}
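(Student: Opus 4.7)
The plan is to split the theorem into three pieces and handle them in order: existence of $C$ satisfying \eqref{eq:sklar}, uniqueness of $C$ on $[0,1]^D$ under the continuity hypothesis, and the converse construction.

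For existence, I would work with the generalized inverse (quantile function) $F_i^{-1}(u) := \inf\{x \in \mathbb{R} : F_i(x) \geq u\}$ for each $i=1,\ldots,D$, and take as candidate
\[
C(u_1,\ldots,u_D) := G\!\left(F_1^{-1}(u_1),\ldots,F_D^{-1}(u_D)\right),\qquad (u_1,\ldots,u_D)\in[0,1]^D.
\]
To verify that $C$ is a copula in the sense of Definition \ref{def:C}, I would check uniform marginals using $\lim_{x\to\infty} G(\ldots,x,\ldots)=F_i$ together with the identity $F_i(F_i^{-1}(u))=u$ that holds whenever $F_i$ is continuous (and, in the general case, holds on the range of $F_i$). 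The $D$-increasing (rectangle-inequality) property for $C$ is inherited from the same property for $G$ because each $F_i^{-1}$ is non-decreasing, so rectangles in $u$-space pull back to rectangles in $x$-space. Composing with $F_i$ then gives \eqref{eq:sklar} on $\mathrm{Range}(F_1)\times\cdots\times\mathrm{Range}(F_D)$.

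For uniqueness under continuity, once each $F_i$ is continuous its range is all of $(0,1)$ (or a dense subset including the endpoints in the limit). The identity $C(F_1(x_1),\ldots,F_D(x_D))=G(x_1,\ldots,x_D)$ therefore pins $C$ down on a dense subset of $[0,1]^D$, and since any copula is Lipschitz (hence continuous) on $[0,1]^D$, the values on the dense subset uniquely determine $C$ everywhere. For the converse, assume $C$ is a copula and define $G$ by the right-hand side of \eqref{eq:sklar}. Right-continuity and the limit conditions at $\pm\infty$ transfer from the $F_i$'s and from $C$; the marginals come out right because $C(1,\ldots,1,u_i,1,\ldots,1)=u_i$, yielding $G(\infty,\ldots,x_i,\ldots,\infty)=F_i(x_i)$; and the $D$-increasing property for $G$ on a rectangle $\prod_i[a_i,b_i]$ coincides with the $D$-increasing property for $C$ on $\prod_i[F_i(a_i),F_i(b_i)]$, which holds because $C$ is itself a CDF.

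The main obstacle is the non-continuous case for the existence part: when some $F_i$ has jumps, $F_i(F_i^{-1}(u))\ne u$ on the jump gaps, so the definition above only produces a function on the product of the ranges, and extending it to a copula on all of $[0,1]^D$ is the delicate step. I would address this by multilinearly interpolating across the jump intervals (the standard subcopula-to-copula extension), or equivalently by invoking Rüschendorf's distributional transform with an auxiliary uniform variable, which gives $F_i(X_i,V_i)\sim U[0,1]$ and produces a bona fide copula whose restriction to the range agrees with $G$. Uniqueness is then lost on the jump gaps, consistent with the statement, which only claims uniqueness under continuity.
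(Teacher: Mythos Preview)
Your sketch is a sound outline of the standard textbook proof of Sklar's Theorem (existence via quantile functions and the subcopula extension, uniqueness via density plus the Lipschitz property of copulas, and the converse by direct verification). However, there is nothing to compare against: the paper does \emph{not} prove Sklar's Theorem. It is stated in Section~\ref{sec:copMix} purely as a background result, with the implicit reference to standard sources such as \cite{Nelsen2006, durante2015principles}; the paper's original proofs concern only the BSHQI density estimator (Theorems~\ref{th:density} and~\ref{th:MSE}).

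So your proposal is not wrong, but it is answering a question the paper never poses. If you were asked to supply a proof the authors omitted, what you have is essentially the classical argument and would be acceptable, with the usual caveat you already flag: in the non-continuous case the candidate $C(u)=G(F_1^{-1}(u_1),\ldots,F_D^{-1}(u_D))$ is only a subcopula on $\mathrm{Range}(F_1)\times\cdots\times\mathrm{Range}(F_D)$, and the multilinear (bilinear in each pair of coordinates) extension across jump gaps---or the distributional-transform route---is needed to obtain a genuine copula on all of $[0,1]^D$.
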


Given a CDF $G$ with PDF $g$, and a copula $C$ defined as in Definition \ref{def:C}, the density copula function $c$ can be computed as,

\begin{equation}\label{density}
c(u_1,\ldots,u_D)=\frac{g(F_1^{-1}(u_1),\ldots,F_D^{-1}(u_D))}{f_1(F_1^{-1}(u_1))\cdots f_D(F_D^{-1}(u_D))},
\end{equation}
where  $f_1, \ldots, f_D$ are the PDFs of the marginals. 

A novel algorithm, which takes advantage of the proposed construction for empirical cumulative distribution based on BSHQI, for estimating the marginals of a chosen copula is presented. As main application, we show the performance of such algorithm in the clustering context.  Therefore, the following formulation will be framed within the clustering setting. 

%Let us assume the dataset has $n$ observations and $D$ features. 
Our goal is to implement a model-based algorithm capable of correctly identifying how the instances of the dataset can be grouped into different clusters. In this sense, using copulas provides a way to fit data that have different probability distributions, thus having an advantage in better discriminating the possible clusters of a dataset.
We can assume an a priori model made with $K$ clusters and the data in each single cluster are distributed as a multidimensional Copula belonging to the Elliptical family, in particular Gaussian Copula, and Archimedean family, in which we consider the Clayton, Gumbel and Frank copulas \cite{Joe_1996}.  %The justification of the use of the Gaussian copula is due to its usefulness in data of quite large size, contrary to what happens with the Archimedean copula family. 
We therefore stress the fact that our mixture distribution is composed of a linear combination of Copulas. This linear combination is called ``\textit{Copula Mixture Model}".  In the following we describe in details the derived formulation. 

%In the statistical modeling, a mixture model serves as a powerful tool for representing heterogeneous populations by assuming that the overall distribution is a composite of multiple underlying sub-distributions. The primary goal of employing a mixture model is to capture the latent structure within the data, where each sub-distribution corresponds to a distinct cluster or component. This allows for a more nuanced representation of complex data sets that may exhibit diverse patterns or behaviors. %The Expectation-Maximization (EM)\cite{Dempster1977} algorithm is frequently employed in conjunction with mixture models to estimate their parameters.
%Next, we will delve into a comprehensive exploration of the mixture model, employing copulas to capture intricate dependencies. This will be coupled with a formalization of the fundamental theoretical concepts, offering a detailed insight into the model's theoretical framework and its practical applications. 

%Suppose  that $k$ copula density functions are given as priors:
\begin{defn}[Semiparametric approach]\label{eq:mixC}
A Copula Mixture is a function consisting of several Copula density functions $c_k$, with $k\in\{1,\ldots,K\}$ and $K$ denoting the number of clusters of the considered dataset. Each Copula $c_k$ in the mixture is characterized by a vector $\boldsymbol{\omega}$ that defines the parameters of the specific copula chosen for the mixture, and by the methods chosen for the approximation of the marginals. Moreover, for each Copula density function $c_k$ is defined a mixing probability $\pi_k$, referred to as mixing coefficient, such that:
%The mixing coefficients are themselves %probabilities and must meet this condition:
\[
\sum_{k=1}^{K}\pi_k=1.
\]
%The mixing coefficients allow to express the sought density probability with a linear combination of the distribution functions. 
\end{defn}
Let us assume a dataset \(\mathbf{X} = (X_1, X_2, \ldots, X_D)\) where each $X_i$ consists of \(n\) i.i.d. observations, and where the \(i\)-th observation is  \(\mathbf{x}_i = (x_{i,1}, x_{i,2}, \ldots, x_{i,D})^T\) for $i=1\ldots,n$. %with \(\mathbf{x} = (\mathbf{x_1}, \ldots, \mathbf{x_D})^T\), we denote a realization of the dataset $\mathbf{X}$ where the $j$-th column is indicated by \(\mathbf{x_j}=( x_{j,1},\ldots, x_{j,n})\) with \(j=1,\ldots,D\).

We express the probabilistic model of mixture copulas for all observations in the following form:

		\begin{equation}\label{join_X_prob}
		    	p(\mathbf{X}|\boldsymbol{\theta})=\prod_{i=1}^{n}p(\mathbf{x}_{i})=\prod_{i=1}^{n}\sum_{k=1}^{K}\pi_k g_k(\mathbf{x}_{i}|\boldsymbol{\omega}_k)
		\end{equation}
where $p$ is the probability, $\boldsymbol{\theta}:=\{\pi_k,\boldsymbol{\omega}_k\}$ indicates the parameters of the model: $\pi_k$ is the mixing probability and $\boldsymbol{\omega_k}$ represents the vector of parameters with respect to the chosen copula, while  $g_k({\bf x}_{i}|\boldsymbol{\omega_k})$ represents the multivariate distribution constructed through the copula density $c_k$, explicitly:

	\[
	\begin{split}
	g_k({\bf x}_{i}|\boldsymbol{\omega}_k)&=c_k(F_1(x_{i,1}),\ldots,F_D(x_{i,D})|\boldsymbol{\omega}_k)\bigg(f_1(x_{i,1})\times \cdots \times f_D(x_{i,D})\bigg)\\
	&=c_k(F_1(x_{i,1}),\ldots,F_D(x_{i,D})|\boldsymbol{\omega}_k)\prod_{j=1}^{D}f_j(x_{i,j}).
	\end{split}
	\]
	Since there are different families of copulas, the parameter $\boldsymbol{\omega}_k$ is the one related to the specific copula that is chosen to model the cluster $k$. 
The goal of the mixture model is to find the optimal parameters in $\boldsymbol{\theta}$, in the equation \eqref{join_X_prob}, 
%and to do this we have to 
that maximize the log-likelihood $\mathcal{L}(\mathbf{X}|\boldsymbol\theta)$:
\begin{equation}\label{log_lik}
\underset{\boldsymbol\theta}{\operatorname{arg\,max}}\mathcal{L}(\mathbf{X}|\boldsymbol\theta)=\underset{\boldsymbol\theta}{\operatorname{arg\,max}}\log p(\mathbf{X}|\boldsymbol\theta)=\underset{\pi_k,\boldsymbol\omega_k}{\operatorname{arg\,max}}\sum_{i=1}^{n}\log\sum_{k=1}^{K}\pi_k g_k({\bf x}_{i}|\boldsymbol{\omega}_k).
\end{equation}
Usually, in order to solve the optimization problem in \eqref{log_lik}, 
the Expectation-Maximization (EM) \cite{Dempster1977} algorithm is employed together with mixture models. 
 To derive the probability that an observation $\mathbf{x}_{i}$, is drawn from $g_k$, we introduce the latent variable $\mathbf{z}_i=(z_{i1},\cdots,z_{iK})^T$ such that $z_{ik}\in\{0,1\}$ with $k\in\{1,\cdots,K\}$. In this context the introduction of latent variables is a common practice \cite{Bishop} that enhances both the theoretical framework and the rationale behind employing the expectation-maximization algorithm. We define the joint distribution $p({\bf x}_i, {\bf z}_i)$ in terms of a marginal distribution $p({\bf z}_i)$ and a conditional distribution $p({\bf x}_i|{\bf z}_i)$,
\begin{equation}\label{prod_rule}
	    	p({\bf x}_{i},\mathbf{z}_i):=p({\bf x}_{i}|\mathbf{z}_i)p(\mathbf{z}_i).
	\end{equation}
The marginal distribution for ${\bf z}_i$ is characterized by the mixing coefficients, with the specification that $p(z_{ik} = 1) = \pi_k$.

 %We can express the probability as:
%\[
%p(z_{nk}=1|x_{n}),
%\]
%it can been shown that: 
%\[
%\pi_k=p(z_{nk}=1) \quad k\in\{1\cdots K\},
%\]
%which means that the probability of an observation that comes from Copula $k$ is actually equivalent to the mixing coefficient for that Copula.
We know beforehand that each $z_{ik}$ occurs independently and that it can only take the value one when $k$ is equal to the cluster from which the observation comes from, then the overall probability is: 
\[
p(\mathbf{z}_i)=p(z_{i1}=1)^{z_{i1}}p(z_{i2}=1)^{z_{i2}}\cdots p(z_{iK}=1)^{z_{iK}}=\prod_{k=1}^{K}\pi_k^{z_{ik}},
\]
while the conditional distribution $p({\bf x}_i|{\bf z}_i)$ can be written as,%the probability that an observation $x_n$ belongs to a copula k, for $k=1,\ldots,K$:
	\begin{equation}\label{pxz}
	p(\mathbf{x}_{i}|\mathbf{z}_i)=\prod_{k=1}^{K}g_k(\mathbf{x}_{i}|\boldsymbol{\bf\omega}_k)^{z_{ik}}.
	\end{equation}
	
 %, while the others $\eta_{dk}$ with $d\in\{1,2, \ldots, D\}$ represent the parameters of the marginal densities of the specific multivariate density of cluster $k$.\\
 
%Recall that our initial aim was to determine what the probability of $\mathbf{z}_n$ given our observation ${\bf x_n}$. 

%The definition of the joint distribution come from the the product rule of probabilities:
%we know that:
	
Let us introduce ${\bf Z}$ as the matrix whose  $i^{th}$ row is the vector of the latent variables $\mathbf{z}_i$, then we have the overall joint distribution, 

\[
p(\mathbf{X},\mathbf{Z}|\boldsymbol{\theta})=\prod_{i=1}^{n}\prod_{k=1}^{K}\pi_k^{z_{i_k}} g_k({\bf x}_{i}|\boldsymbol{\omega}_k)^{z_{i_k}}
\]
and hence the log-likelihood is,
\begin{equation}\label{entireloglike}
\log (p(\mathbf{X},\mathbf{Z}|\boldsymbol{\theta}))=\sum_{i=1}^{n}\sum_{k=1}^{K}z_{i_k}\left(\log\pi_k+\log g_k({\bf x}_{i}|\boldsymbol{\omega}_k)\right).
\end{equation}

%%%% MArginale d p(xi) 
%	and the marginal distribution of $\mathbf{x}_i$ is then obtained by summing the joint distribution over all possible states ${\bf z}_i$ for $i=1,\ldots, n$,

%First we will need $p(\mathbf{x}_{n})$ \textcolor{red}{controlla bene con i calcoli}, and we can obtain this with marginalization and as is know we just need to sum up the terms on $\mathbf{z}_n$:
	%\begin{equation}\label{marginalization}
	%p(\mathbf{x}_{i})=\sum_{k=1}^{K}p(\mathbf{x}_{i}|{z}_{i,k})p({z}_{i,k})=\sum_{k=1}^{K}\pi_k g_k(\mathbf{x}_{i}|\boldsymbol{\omega}_k).
%	\end{equation}

%	Hence, with equation (\ref{marginalization}) we have identified an equivalent formulation of the copula mixture involving an explicit latent variable.

%Recalling the equation (\ref{prod_rule}),

%\[
%p({\bf x}_{n},\mathbf{z}_n)=p({\bf x}_{n}|\mathbf{z}_n)p(\mathbf{z}_n)
%\]

%\begin{equation}\label{entireloglike}
%\log (p(\mathbf{X},\mathbf{Z}|\theta))=\sum_{i=1}^{n}\sum_{k=1}^{K}z_{i_k}\left(\log\pi_k+\log g_k({\bf x}_{i}|\boldsymbol{\omega}_k)\right)
%\end{equation}

The formulated expression for the joint distribution $p(\mathbf{X},\mathbf{Z}|\boldsymbol{\theta})$ leads to significant simplifications in the EM algorithm. However this cannot be computed, since ${\bf z}_i$ is unknown, then we evaluate the
%Our understanding of the values of the latent variables in \(Z\) is solely provided by the posterior distribution \(p(Z|X, \theta)\). As the complete-data log likelihood is not available for use, we instead examine its expected value under the posterior distribution of the latent variable. This corresponds, as we shall see, to the E step of the Expectation-Maximization (EM) algorithm.
conditional probability of ${\bf z}_i$ given ${\bf x}_i$, and its value can be determined using Bayes' theorem,

\[
p(z_{ik}=1|{\bf x}_{i})=\frac{p({\bf x}_{i}|z_{ik}=1)\,p(z_{ik}=1)}{p(\bf{x}_i)}=
\frac{p({\bf x}_{i}|z_{ik}=1)\,p(z_{ik}=1)}{\displaystyle{\sum_{k'=1}^{K}\pi_{k'} p({\bf x}_{i}|z_{i{k'}}=1)p(z_{i{k'}}=1)}},
\]
where the quantity $p({\bf x}_i)$ has been computed via marginalization.
Knowing that $p(z_{ik}=1)=\pi_k$ and  $p({\bf x}_{i}|z_{ik}=1)=g_k({\bf x}_{i}|\boldsymbol{\omega}_k)$, then, the above equation becomes: 
\begin{equation}\label{responsability}
p(z_{ik}=1|{\bf x}_{i})=\frac{\pi_k g_k({\bf x}_{i}|\boldsymbol{\omega}_k)}{\displaystyle{\sum_{k'=1}^{K}\pi_{k'} g_{k'}({\bf x}_{i}|\boldsymbol{\omega}_{k'})}}.
\end{equation}
The quantity in \eqref{responsability} is called \textit{responsibility of $k$-th cluster to observation $i$} and from now on will be denoted with the symbol $\gamma_{ik}$. Then for every cluster we have an array of responsabilities. This quantity is crucial in the Expectation step of the EM algorithm  for the maximization of the complete log-likelihood.
%The subscript $n$ above $z$ signifies the computation  of $\gamma(z_{ik})$ for all observation within $k$ cluster.\\

\section{Expectation-Maximization for Copula Mixture Model }
%EM is an iterative optimization algorithm designed to handle situations where there are latent or unobserved variables in the model. In the context of mixture models, these latent variables represent the cluster assignments of data points. The EM algorithm iteratively refines its estimates of the model parameters by alternately performing an expectation (E) step and a maximization (M) step.

%During the E-step, the algorithm calculates the expected values of the latent variables given the observed data and the current parameter estimates. In the M-step, the algorithm maximizes the likelihood function by updating the model parameters based on the computed expected values from the E-step. This iterative process continues until convergence, ensuring that the model converges to a locally optimal solution.

%The appeal of the EM algorithm lies in its versatility and effectiveness, especially in scenarios involving incomplete or missing data. Its ability to handle latent variables makes it particularly well-suited for mixture models, where the assignment of data points to different clusters is not directly observed. By iteratively refining the model parameters, the EM algorithm uncovers the hidden structure of the data, making it a fundamental tool in the analysis of mixture models within the realm of statistical modeling and machine learning.
We formalize the expectations maximization algorithm for copula mixture model in the general form, this can be implemented in different ways:
\begin{itemize}
	\item a completely parametric way in which the parameters of the copula and the parameters of the marginal probability densities are estimated. In this case there are two approaches: Inference For Marginal (IFM) \cite{Joe_1996}, and Expectation/Conditional Maximization ECM \cite{Meng_1993}. This last one, although it may work well for small-sized data,  requires high computational costs when there is a lack of a-priori knowledge about the marginals. In such cases, one needs to search for the distribution that fits the data well within a set of distributions;
	\item a semiparametric way %in which the marginals will be estimated as empirical and only the parameters relating to the copulae will remain to be evaluated.  
in which, the estimation of marginals is approached empirically. %bypassing the need for precision in marginal estimations. 
%The crux of efficiency lies only in the focused evaluation of parameters tied to the copulae. This not only expedites the computational process but also showcases the method's adaptability, allowing for a more agile and versatile modeling framework. 
The semiparametric nature of this approach strikes a balance between flexibility and computational efficiency.%, making it particularly well-suited for scenarios where a pragmatic compromise between precision and computational cost is paramount. 
\end{itemize}  
In this work, we adopt the semiparametric approach, leveraging the approximation properties of densities with the previously introduced BSHQI. Below, we describe the details of the proposed algorithm.

In the Expectation step, we employ the existing parameter values \( \boldsymbol{\theta}:=\boldsymbol{\theta}^{(t)} \) to determine the posterior distribution of the latent variables at the $t$-step of the algorithm, denoted as \( p({\bf Z}|{\bf X}, \boldsymbol{\theta}^{(t)}) \).  Subsequently, we utilize this posterior distribution to calculate the expectation of the complete-data log likelihood, evaluated for a general parameter vector \( \boldsymbol{\theta} \). This expectation, is called \textit{auxiliary function} represented as \( Q(\boldsymbol{\theta}, \boldsymbol{\theta}^{(t)}) \), is given by

\[
Q(\boldsymbol{\theta},\boldsymbol\theta^{(t)}) =
\mathbb{E}_{(\mathbf{Z}\mid\mathbf{X},\boldsymbol\theta^{(t)})}\left( \log p (\mathbf{X},\mathbf{Z}|\boldsymbol\theta)  \right).
\]
To simplify the notation, we shall consider $\mathbb{E}_{(\mathbf{Z}\mid\mathbf{X},\boldsymbol\theta^{(t)})}\left( \log p (\mathbf{X},\mathbf{Z}|\boldsymbol\theta)  \right)=\mathbb{E}\left(\log p(\mathbf{X},\mathbf{Z}|\boldsymbol\theta)\right)$ and so we have:
\begin{equation}\label{aux_func}
\begin{split}
Q(\boldsymbol\theta,\boldsymbol\theta^{(t)})&=\mathbb{E}[\log p(\mathbf{X},\mathbf{Z}|\boldsymbol\theta^{(t)})]=\sum_{\mathbf{Z}}p(\mathbf{Z}|\mathbf{X},\boldsymbol\theta^{(t)})\log p(\mathbf{X},\mathbf{Z}|\boldsymbol\theta)=\sum_{i=1}^{n}\sum_{k=1}^{K}\gamma_{ik}^{(t)}\left(\log \pi_k+\log g_k(x_{i}|\boldsymbol \omega_{k})\right)\\
&=\sum_{i=1}^{n}\sum_{k=1}^{K}\gamma_{ik}^{(t)}\bigg(\log \pi_k+\log \Big(c_k(\hat{F}_1(x_{i,1}),\ldots,\hat{F}_D(x_{i,D})|\boldsymbol \omega_{k}\Big)+\sum_{j=1}^{D} \log \hat{f}_{j}(x_{ij})\bigg)
\end{split}
\end{equation}
in which $\gamma_{ik}^{(t)}$ is the \textit{Responsibility} introduced in equation (\ref{responsability}).

\paragraph{Maximization step:} In the maximization step we update the parameters in $\boldsymbol\theta^{(t+1)}$ by computing:
\[
\boldsymbol\theta^{(t+1)} = \underset{\boldsymbol\theta}{\operatorname{arg\,max}} \ Q(\boldsymbol\theta,\boldsymbol\theta^{(t)})
\]
This is the most complex step of the algorithm and in the following we address separately the computation of the optimal parameters $\pi_k^{(t+1)}$ and $\boldsymbol{\omega}_k^{(t+1)}$.

For $\pi_k^{(t+1)}$, a closed form can be derived. Note that the maximization of the function $Q(\boldsymbol\theta\mid\boldsymbol\theta^{(t)})$ should take into account the restriction that $\sum_{k=1}^{K}\pi_k=1$. Hence, we can add a Lagrange multiplier to (\ref{aux_func}) ,

\[
Q(\boldsymbol\theta,\boldsymbol\theta^{(t)})=\sum_{i=1}^{n}\sum_{k=1}^{K}\gamma_{ik}^{(t)}\left(\log \pi_k+\log g_k(x_{i}|\boldsymbol \omega_k)\right)-\lambda \Bigg(\sum_{k=1}^{K}\pi_k-1\Bigg).
\]
Taking the derivative of $Q$ with respect to $\pi_k$ and setting it equal to zero, leads to
\begin{equation}\label{der_pi}
    \frac{\partial Q(\boldsymbol\theta,\boldsymbol\theta^{(t)})}{\partial \pi_k}=\sum_{i=1}^{n}\frac{\gamma_{ik}^{(t)}}{\pi_k}-\lambda=0.
\end{equation}

Then, by rearranging the terms and applying a summation over $k$ to both sides of the equation, we obtain:
\[
\sum_{i=1}^{n}\gamma_{ik}^{(t)}=\pi_k\lambda \rightarrow \sum_{k=1}^{K}\sum_{i=1}^{n}\gamma_{ik}^{(t)}=\sum_{k=1}^{K}\pi_k\lambda
\]
we know that the summation of all mixing coefficients equals one. In addition, we know that summing up the responsabilities $\gamma$ over $k$ will also give us 1. Thus we get $\lambda = n$. Using this result, we can solve \eqref{der_pi} for $\pi_k$:
\[
\pi_k^{(t+1)}:=\frac{\sum_{i=1}^{n}\gamma_{ik}^{(t)}}{n}.
\]

Regarding the optimization with respect to $\boldsymbol{\omega}_k$, systematic decomposition into distinct maximization steps is carried out in order to get the 
 optimization of the complete data log-likelihood with respect to the parameters of the model. 
 
 % Commento per inizializzazione modello
 %The initial phase concentrates the updated margins and employing a refined approach involving weighted density by using BSHQI empirical density estimation, laying the foundation for subsequent steps. The subsequent focus shifts towards the maximization of the copula and its associated parameters $\omega_k^{(t+1)}$.

%In the course of this critical step, our methodology undergoes a rigorous comparative analysis with both BSHQI and the conventional strategy employing kernel density estimation. This comparative examination is conducted with a stringent constraint, where the bandwidth value remains consistent across both methodologies.

One notable feature of our algorithm lies in its inherent flexibility in the selection of copulas from the outset. When a single copula is chosen initially, the subsequent maximization step, following the updating of marginals, exclusively targets the parameters of the selected copula until the log-likelihood function converges. Conversely, when the initial choice encompasses various copulas, a comprehensive fitting process is initiated after updating the marginals. The copula that best aligns with the updated data is then chosen based on the maximum likelihood, enhancing the adaptability and performance of our algorithm in diverse scenarios, formally:
\begin{itemize}
	\item \textbf{Maximization first step:} For each cluster, use the data ${\bf X}_{j}$, $j \in \{1,2,\ldots,D\}$ to update the CDFs $\hat{F}_{j}$ according to the equation (\ref{eq:weightedCDF}) in which the weights are the \emph{responsabilities} and compute the PDFs $\hat{f}_{j}$, $j \in \{1,2,\ldots,D\}$  with the BSHQI strategy described in Section \ref{sub:qispline}.
	\item \textbf{Maximization second step (one copula):} By looking at \eqref{aux_func}, for each clusters, we need to maximize only the following,
\begin{equation}\label{eq:om_max}
	\sum_{i=1}^{n}\sum_{k=1}^{K}\gamma_{ik}^{(t)}\bigg(\log \Big(c_k(\hat{F}_1(x_{i,1}), \ldots, \hat{F}_D(x_{i,D});\boldsymbol \omega_k\Big)\bigg),
	\end{equation}
	with respect to $\boldsymbol \omega_k$ for the optimal copula parameters. 
		\item \textbf{Maximization second step (two or more copulas):} If the choice of different copulas is enabled, then, the optimization of \eqref{eq:om_max} is carried out also for each copula. 
  %:
%	\[
%	\sum_{i=1}^{n}\sum_{k=1}^{K}\bigg(\log \Big(c_k(\hat{F}_1(x_{1}), \ldots, \hat{F}_d(x_d);\boldsymbol \omega_k\Big)\bigg),
	%\]
	%with respect to $\boldsymbol \omega_k$ 

 Note that, in this case for the maximization of the log-likelihood we use the  Limited-memory Broyden, Fletcher, Goldfarb, Shanno (L-BFGS-B) method\footnote{\url{https://docs.scipy.org/doc/scipy/reference/optimize.minimize-lbfgsb.html}}, that is an optimization algorithm in the family of quasi-Newton methods that approximates the Broyden–Fletcher–Goldfarb–Shanno algorithm (BFGS) using a limited amount of computer memory \cite{galen2007, malouf2002}.
\end{itemize}

% Finally, for each parallel stage we consider all the outputs clusters and we select the best one that reaches the higher Calinski-Harabasz Index  that can be used to evaluate the model when ground truth labels (common in clusterization) are not known where the validation of how well the clustering has been done is made using quantities and features inherent to the dataset.
To start the algorithm we use the following initialization procedure:

 %Given the semi-parametric nature of EM, the only parameters we need to estimate are the mixture weight $\pi_k$ and the copula parameter. 

%Setting the current estimates of the parameters $\boldsymbol{\theta}^{(t)}$, we can formulate the algorithm of Copula Mixture Model . 

\paragraph{Initialization} 

\begin{itemize}
\item[0.] Choose a set $S$ of copulas function among Gaussian, Clayton, Gumbel, Frank.
    \item[1.] Create a random clustering partition of the given data. We call $X_{j,r}$ the subset of the $j$-th column given by the produced partition.
    \item[2.] For every cluster, compute the the marginals $\hat F_j(X_{j,r})$ for  $j=1,\ldots,D$ either with the KDE methodology or with the BSHQI (the choice is at the user discretion). 
    \item[3.] For every cluster, find the best copula in $S$ according to the maximum likelihood achieved and keep their parameters $\boldsymbol{\omega}_k$ fitted by the produced data $u_1=\hat F_1(X_{1,r}),u_2=\hat F_2(X_{2,r}), \ldots, u_D=\hat F_D(X_{D,r})$, %by using the MLE algorithm 
    and compute the associated $\pi_k$ for $k=1,\ldots,K$.
    \item[4.] Repeat steps 1-3 for $5$ times and select the copulas partition with the maximum likelihood  so that 
    $\boldsymbol{\theta}^{(0)} := \{\pi_k^{(0)}, \boldsymbol{\omega}_k^{(0)}\}= \{\pi_k^{\mbox{best}}, \boldsymbol{\omega}_k^{\mbox{best}} \}$ for $k=1,\ldots, K$.
\end{itemize}

    %\item for each parallel stage, the first random step is run for an adequate number of times and we choose the first initialized clusters by MLE,
 
    %for each cluster, the marginals $\hat F_j(\mathbf{x})$, $j=1,\ldots,D$, are provided by an empirical approach. This yields the data $u_1=\hat F_1(x_1),u_2=\hat F_2(x_2), \ldots, u_D=\hat F_D(x_D)$, which need to be fitted by copulas. It's important to note that we employ two approaches for this purpose: the first involves density estimation using kernel density estimation methodology, and the second employs the BSHQI methodology. In both cases, we have chosen the bandwidth, specifically the bandwidth of Rice \textcolor{red}{aggiungere quanto vale e una referenza}, for the kernel density estimation;\\
  %  for each cluster the copula is fitting by MLE on the data calculated in the previous step, and the best copula function is chosen by using the AIC. The maximization of the likelihood is done numerically by using, L-BFGS-B.% that is an optimization algorithm in the family of quasi-Newton methods that approximates the Broyden–Fletcher–Goldfarb–Shanno algorithm (BFGS) using a limited amount of computer memory \cite{galen2007, malouf2002}.

%The steps above provide the initial parameters $\theta^{(t)}$ accordingly. For instance, in this case we have:
%\[
%\theta^{(t)}=\{\pi_k^{(t)}, %\omega_k^{(t)}\} \quad \textnormal{for all} %\quad k\in\{1,2,\ldots,K\} 
%\]

\paragraph{Expectation} 
In this step we evaluate the responsabilities. 
\paragraph{Maximization} 
The algorithm as just described monotonically approaches a local minimum of the cost function.
After computing the new estimates, we set $\theta^{(t)} =(\pi_k^{(t)},\omega_k^{(t)})$ for $k = 1\ldots,k$, and go to the next Expectation step.
We set the tolerance $tol\leq 10^{-4}$ than, the best parameters are obtained when the the convergence of the log-likelihood is reached, i.e.
\begin{equation}
    \frac{|\mathcal{L}(\mathbf{X}|\boldsymbol\theta^{(t+1)})-\mathcal{L}(\mathbf{X}|\boldsymbol\theta^{(t)})|}{1+|\mathcal{L}(\mathbf{X}|\boldsymbol\theta^{(t+1)})|}<tol \quad \textnormal{for} \quad t=1, \ldots, (\emph{iter}-1).
\end{equation}
\begin{comment}
\begin{algorithm}[htbp]
\caption{Algorithm CopMixM\_BSHQI}
\begin{algorithmic}[1]

\Procedure{Roy}{$a,b$}       \Comment{This is a test}
    \State System Initialization
    \State Read the value 
    \If{$condition = True$}
        \State Do this
        \If{$Condition \geq 1$}
        \State Do that
        \ElsIf{$Condition \neq 5$}
        \State Do another
        \State Do that as well
        \Else
        \State Do otherwise
        \EndIf
    \EndIf

    \While{$something \not= 0$}  \Comment{put some comments here}
        \State $var1 \leftarrow var2$  \Comment{another comment}
        \State $var3 \leftarrow var4$
    \EndWhile  \label{roy's loop}
\EndProcedure

\end{algorithmic}
\end{algorithm}
\end{comment}

\section{Experiments}\label{change_experiments}

In this section, we describe the experiments conducted to validate the accuracy of our algorithm. Specifically, we commence by examining synthetic datasets constructed with diverse families of copulas. Subsequently, we delve into the assessment of real-world datasets.  The clustering strategy using copulas and density estimation with the proposed BSHQI technique, called CopMixM\_BSHQI, will be compared by using marginals estimation with kernel density estimation using the Python package KDEpy, referred to as CopMixM\_KDEpy here, which is found to be the fastest among various python packages for empirical density estimators as it utilizes the convolution Fast Fourier Transform for calculations. Additionally, we will compare the results obtained with clustering algorithms available from \texttt{scikit-learn}: Gaussian Mixture Model (GMM), where we set the initialization to `\texttt{random}', the tolerance equal to $10^{-4}$ (as in our algorithm) and default parameters setting otherwise, and K-Means  where we set the initialization to \texttt{random}. To measure the achieved performance we use some classical metrics, available from \texttt{scikit-learn}, such as:
\begin{itemize}
    \item Silhouette Score: a value between $-1 $ and $1$, with $1$ being the best;
    \item Calinski-Harabasz Index:   highest values indicate better performance.
    \item  Davies-Bouldin Score: the minimum is $0$, lower values indicate better output.
\end{itemize}
In those tests where ground truth labels are accessible, the evaluation  is done also by using the Adjusted Rand Score, Homogeneity Score, Rand Score, and Completeness Score which are permutation invariant and the highest value $1$ indicates optimal clustering.

\subsection{Synthetic Dataset}

\begin{figure}[htbp]
		\centering
		\subfigure[$\mathcal{X}_1$]{\includegraphics[scale=0.2]{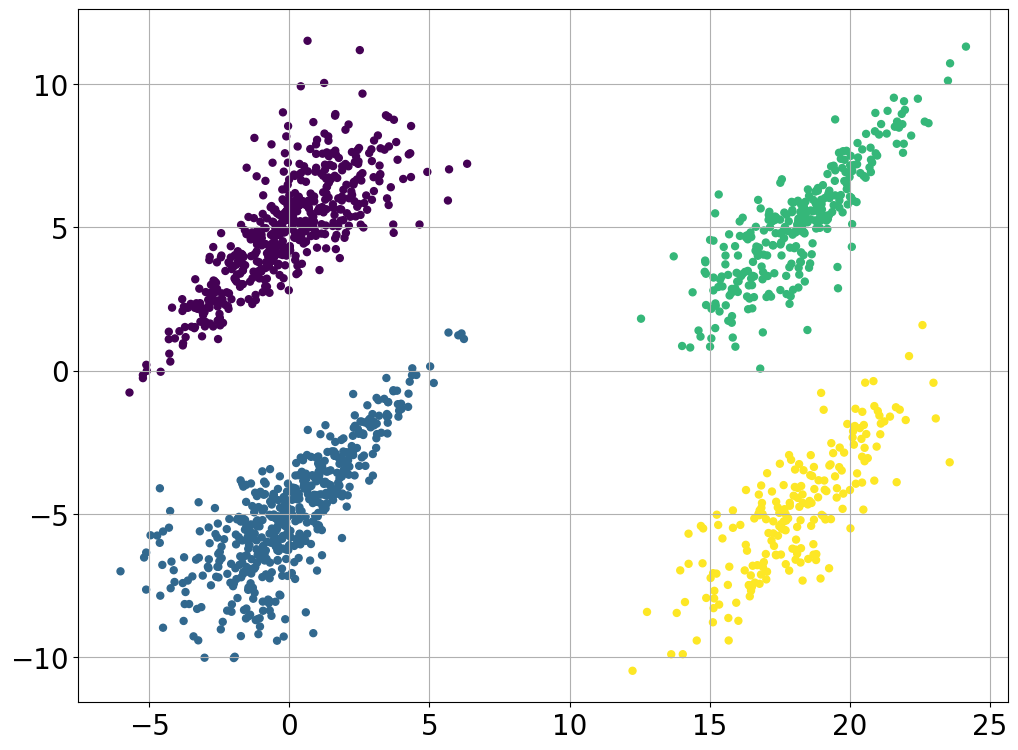}}\quad
		\subfigure[$\mathcal{X}_2$]{\includegraphics[scale=0.2]{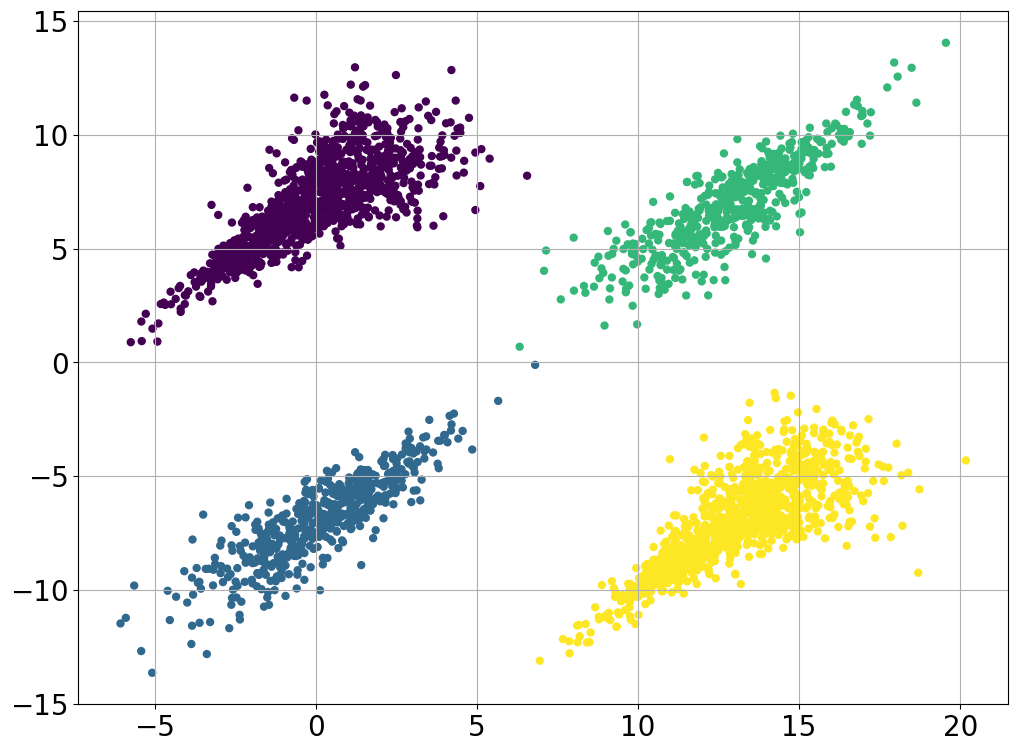}}\quad
  \subfigure[$\mathcal{X}_3$]{\includegraphics[scale=0.2]{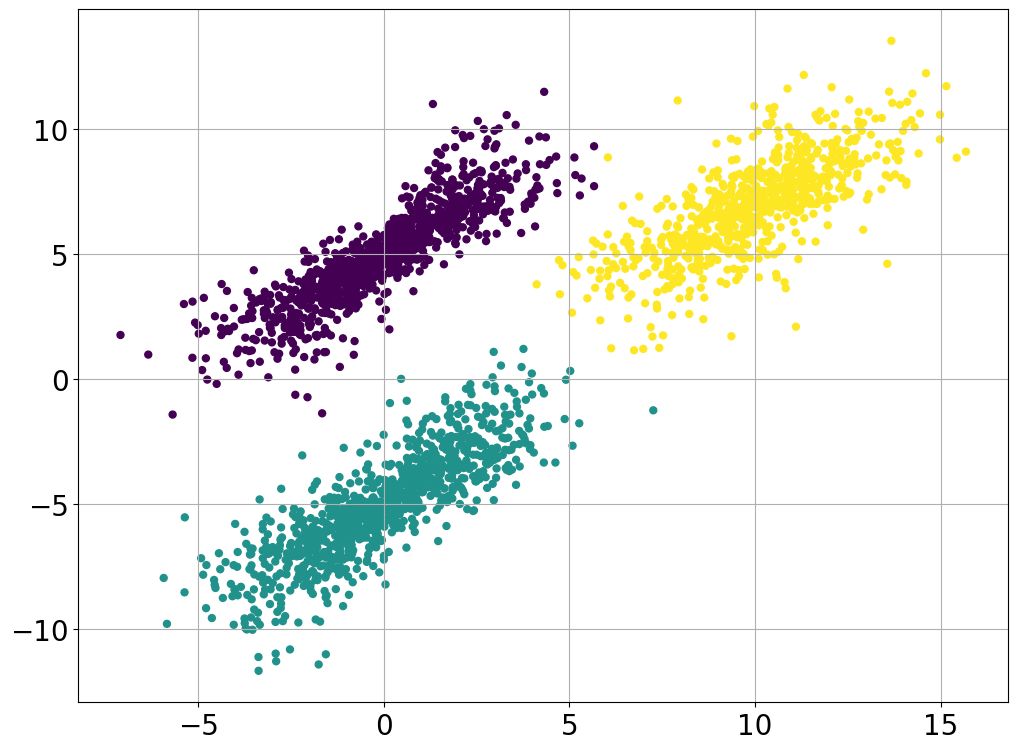}}\\
      \subfigure[$\mathcal{X}_4$]{\includegraphics[scale=0.3]{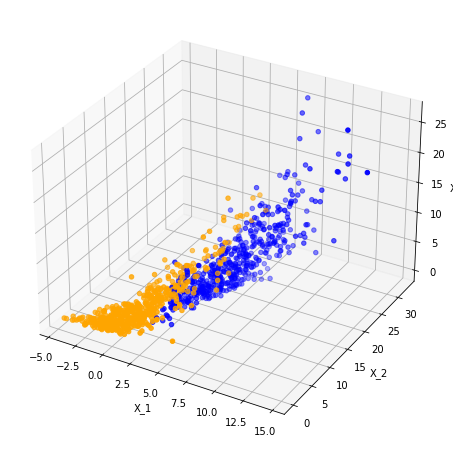}}\quad
    \subfigure[Pairwise $\mathcal{X}_4$]{\includegraphics[scale=0.25]{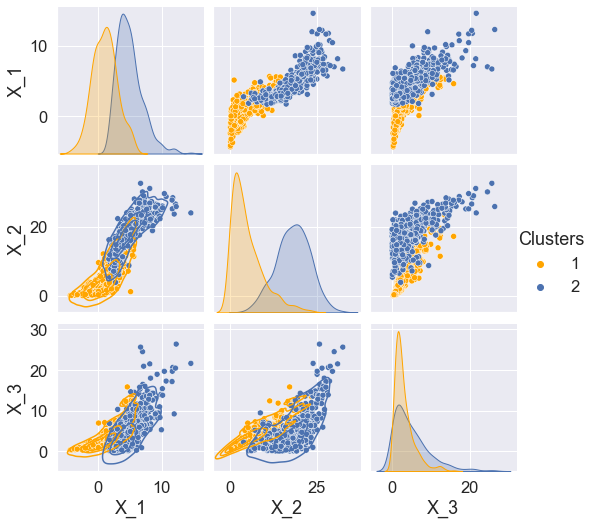}}\quad
		\caption{Synthetic dataset: (a) Ground truth $\mathcal{X}_1$, (b) Ground truth $\mathcal{X}_2$, (c) Ground truth $\mathcal{X}_3$, (d) Ground truth $\mathcal{X}_4$, (e) Pairwise ground truth $\mathcal{X}_4$}
		\label{fig:gt_synt}
\end{figure}
%\begin{figure}[htbp]
%		\centering
	
 %   \subfigure[$\mathcal{X}_4$]{\includegraphics[scale=0.4]{images/gt_dataset4.png}}\quad
  %  \subfigure[Pairwise $\mathcal{X}_4$]{\includegraphics[scale=0.3]{images/gt_dataset5.png}}\quad
	%	\caption{Synthetic dataset: (a) Ground truth $\mathcal{X}_4$, (b) Pairwise}
		%\label{fig:gt_synt_3d}
%\end{figure}
\begin{figure}[htbp]
		\centering
		\subfigure[GMM]{\includegraphics[scale=0.2]{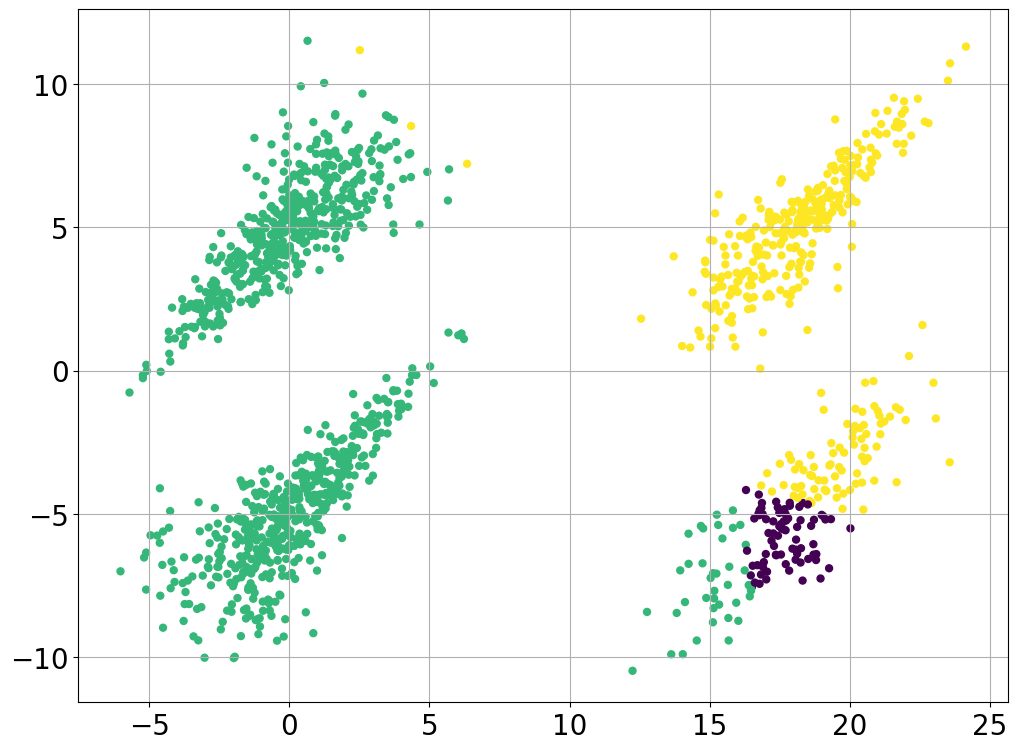}}\quad
		\subfigure[CopMixM\_BSHQI]{\includegraphics[scale=0.2]{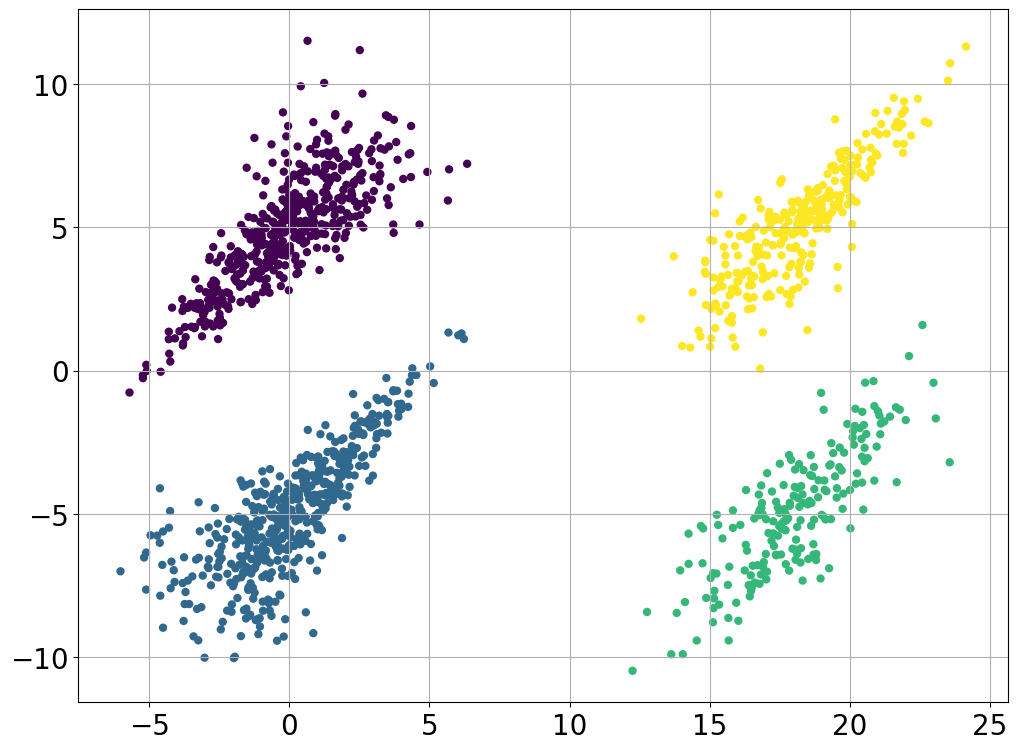}}\quad
  \subfigure[CopMixM\_KDEpy]{\includegraphics[scale=0.2]{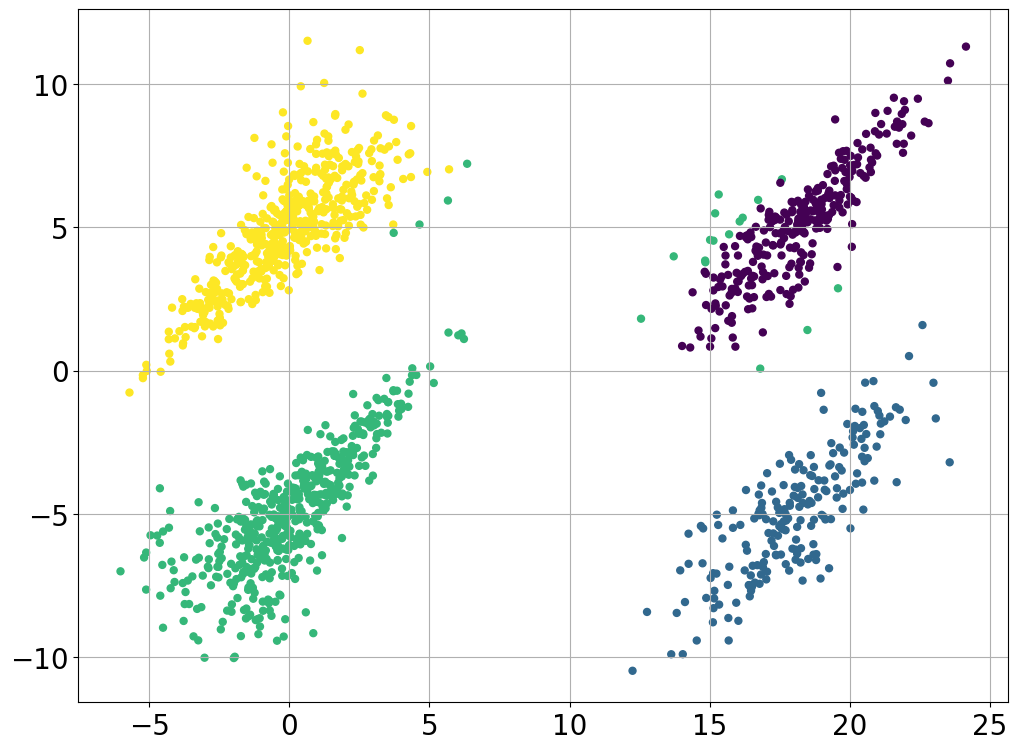}}\quad
		\caption{Synthetic dataset $\mathcal{X}_1$: (a) GMM, (b) CopMixM\_BSHQI, (c) CopMixM\_KDEpy}
		\label{fig:synt1_results}
  \end{figure}

\begin{table}
\centering
 
\begin{adjustbox}{max width=1\textwidth}
\begin{tabular}{lcccc}
\hline
\hline
&K-Means& GMM & CopMixM\_BSHQI & CopMixM\_KDEpy \\
\hline \hline
 Silhouette Score & 0.660      &   0.512 &        {\bf 0.659} &  0.628             \\
 Calinski-Harabasz Index& 5977  & 1570     &     {\bf 5890}     &  3940             \\
 Davies-Bouldin Score& 0.482   &    0.529 &        {\bf 0.486} &  0521             \\
  \hline            
 Adjusted Rand Score&  0.974   &    0.437 &        {\bf 1}     &  0.963             \\
 Homogeneity Score& 0.963     &    0.455 &        {\bf 1}     &  0.952             \\
 Rand Score&    0.989          &    0.707 &        {\bf 1}     & 0.985              \\
 Completeness Score&  0.964   &    0.797 &        {\bf 1}     & 0.957              \\
\hline
\hline
\end{tabular}
\end{adjustbox}
\caption{Clustering metrics for Synthetic Dataset $\mathcal{X}_1$. In bold the best values.}
\label{tab:synt1_results}
\end{table}

\begin{table}
	\centering
    
	\begin{adjustbox}{max width=1\textwidth}
	\begin{tabular}{c||c|c||c|c}
		\hline
  \hline
		\bf Clusters & \# Points & Different Copulas  & \# Points& One Copula  \\ 
		\hline
  \hline
		&   &  & &\\
		\bf 1           & 500  & Gaussian   & 505 &Gaussian\\
  \bf 2         & 500 & Gaussian   & 498 & Gaussian\\
  \bf 3            & 300   & Gaussian  & 299 &Gaussian\\
		\bf    4        & 200   & Clayton   &198 &Gaussian \\
  &  &    & &\\
  \hline
	Log-likelihood & \multicolumn{2}{c||}{\bf -7535} & \multicolumn{2}{c}{-8125}\\
            
		\hline

	\end{tabular}

	\end{adjustbox}
    \caption{Results of CopMixM\_BSHQI for the Synthetic Dataset $\mathcal{X}_1$ obtained with different copulas and the Gaussian copula.}
    \label{tab:synt1_diffcop}
\end{table}

\begin{comment}

\begin{table}[h!]
\centering
 %esempio di tabella dove aggiungere i risultati dell'empirica
\begin{adjustbox}{max width=0.8\textwidth}
\begin{tabular}{l|rrr|rrrr}
\hline
\hline
            &   Silhouette&   Calinski-Harabasz &   Davies-Bouldin&   Adjusted Rand &   Homogeneity &   Rand &   Completeness  \\
\hline
\hline
 & & & & & & &\\
 GMM        &              0.439 &                      1380 &                  0.486 &                  0.83 &               0.611 &        0.535 &                0.667 \\
  & & & & & & &\\
 CopMixMBS &              0.659 &                      5890 &                  0.486 &                  1    &               1     &        1     &                1     \\
  & & & & & & &\\
\hline
\hline
\end{tabular}
\end{adjustbox}
\caption{Results for Synthetic Dataset 1 compared with the Gaussian Mixture Model}
\end{table}
\end{comment}

\begin{figure}
		\centering
		\subfigure[GMM]{\includegraphics[scale=0.2]{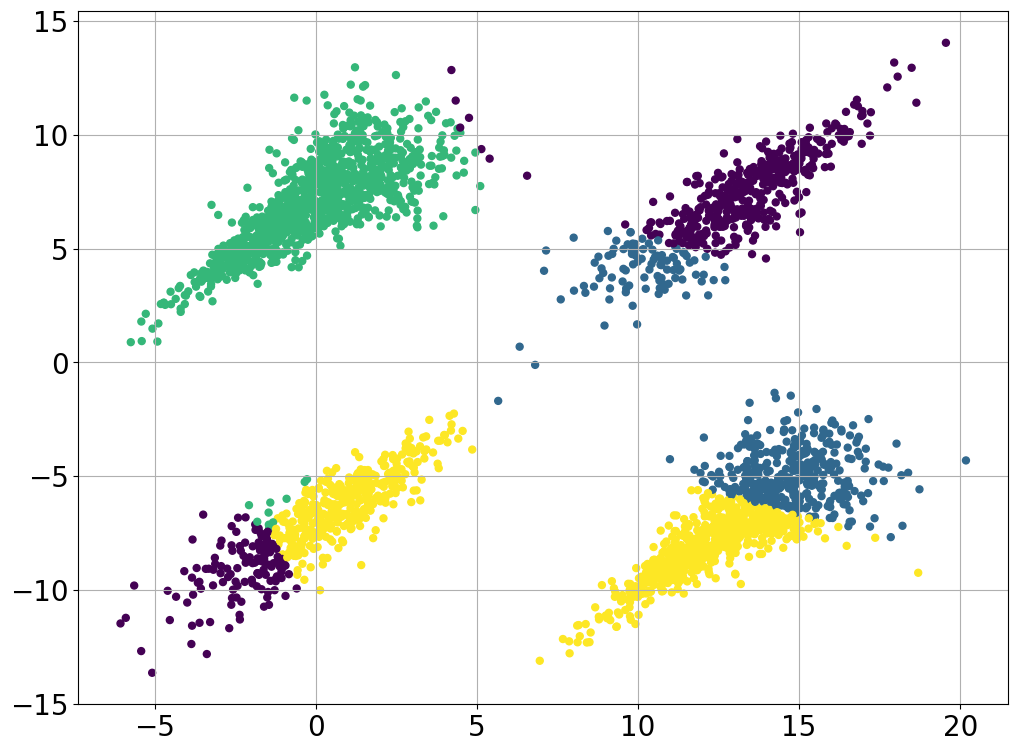}}\quad
		\subfigure[CopMixM\_BSQI]{\includegraphics[scale=0.2]{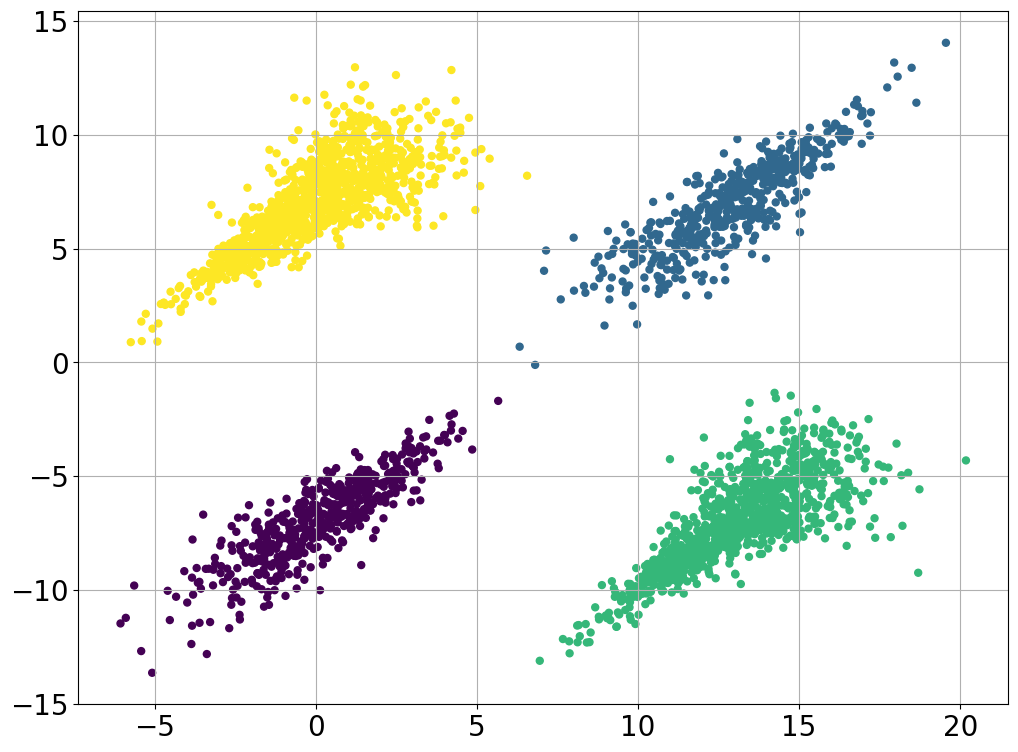}}\quad
  \subfigure[CopMixM\_KDEpy]{\includegraphics[scale=0.2]{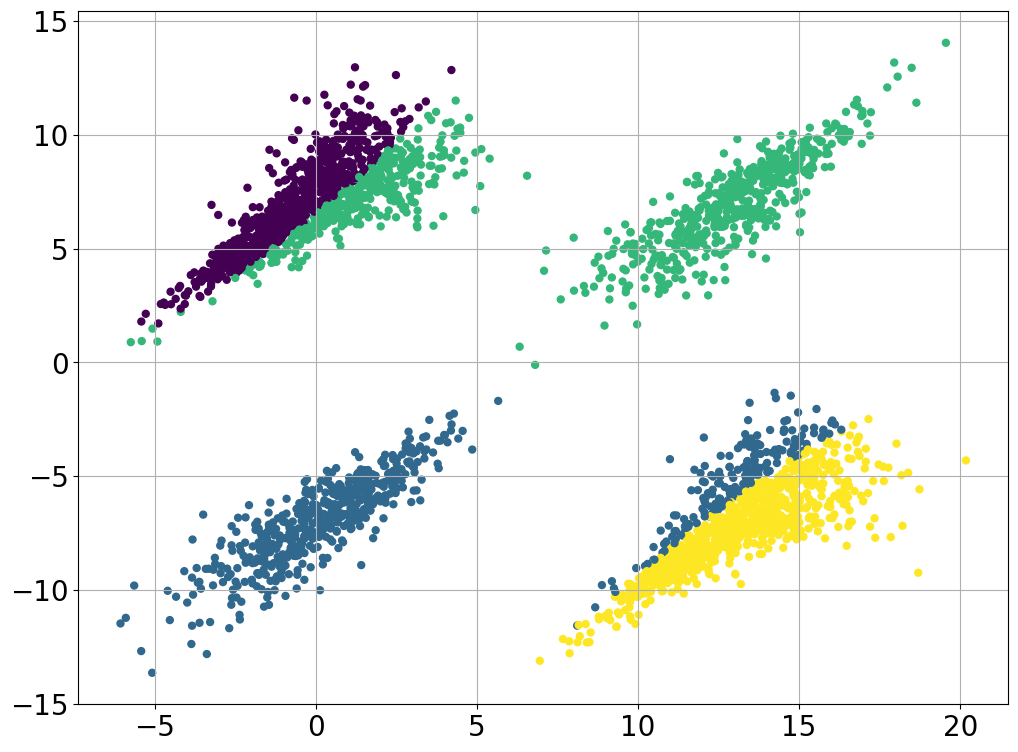}}\quad
		\caption{Synthetic dataset $\mathcal{X}_2$: (a) GMM, (b) CopMixM\_BSQI, (c) CopMixM\_KDEpy}
		\label{fig:synt2_results}
\end{figure}
\begin{comment}

 \begin{table}[h!]
\centering
 
\begin{adjustbox}{max width=1\textwidth}
\begin{tabular}{lccc}
\hline
\hline
			& GMM & CopMixM\_BS & CopMixM\_Emp \\
			\hline \hline
Silhouette score & 0.45485 & {\bf 0.70895} \\
Calinski Harabasz score & 1034.75 & {\bf 17017.45} \\
Davies Bouldin score & 0.89386 & {\bf 0.4115} \\
\hline
			\hline
		\end{tabular}
	\end{adjustbox}
\caption{Results for Synthetic Dataset 2 compared with the Gaussian Mixture Model}
\end{table}
\end{comment}

\begin{table}
\centering
\begin{adjustbox}{max width=1\textwidth}
\begin{tabular}{lcccc}
\hline
\hline
& K-Means& GMM & CopMixM\_BSHQI & CopMixM\_KDEpy \\
\hline \hline
 Silhouette Score& {\bf 0.726}       &    0.269 &        {\bf  0.726} &     0.370  \\
 Calinski-Harabasz Index&11495  & 2640     &    {\bf 11500}     &   2990          \\
 Davies-Bouldin Score& {\bf 0.365}    &    1.48  &         {\bf 0.365}  &      0.964         \\
 \hline
 Adjusted Rand Score& 0.998    &    0.382 &       {\bf 1} &        0.605      \\
 Homogeneity Score&0.994       &    0.488 &        {\bf 0.999} &       0.738        \\
 Rand Score&   0.999           &    0.758 &        {\bf 1} &       0.845       \\
 Completeness Score& 0.994     &    0.512 &        {\bf 0.998} &          0.720     \\
\hline
\hline
\end{tabular}
\end{adjustbox}
\caption{Clustering metrics for Synthetic Dataset $\mathcal{X}_2$. In bold the best values.}
\label{tab:synt2_diffcop}
\end{table}
\begin{table}
	\centering
    
	\begin{adjustbox}{max width=1\textwidth}
	\begin{tabular}{l|c|c||c|c}
		\hline
  \hline
		\bf Clusters & \# Points & Different Copulas  & \# Points & One\_Copula \\ 
		\hline
  \hline
		\textbf{1} & 1013 & Clayton & 1152 &Gaussian \\
		\textbf{2} & 1014 & Clayton & 980 & Gaussian\\
		\textbf{3} & 985 & Gumbel  & 858 & Gaussian\\
		\textbf{4} & 988 & Gumbel   & 1010 & Gaussian\\
   \hline
	Log-likelihood & \multicolumn{2}{c||}{\bf -13324} & \multicolumn{2}{c}{-14873}\\
            
		\hline
		\hline

	\end{tabular}
	\end{adjustbox}
    \caption{Results of CopMixM\_BSHQI for the Synthetic Dataset $\mathcal{X}_2$ obtained with different copulas and the Gaussian copula.}
    \label{tab:synt2_results}
\end{table}

\begin{figure}
		\centering
		\subfigure[GMM]{\includegraphics[scale=0.2]{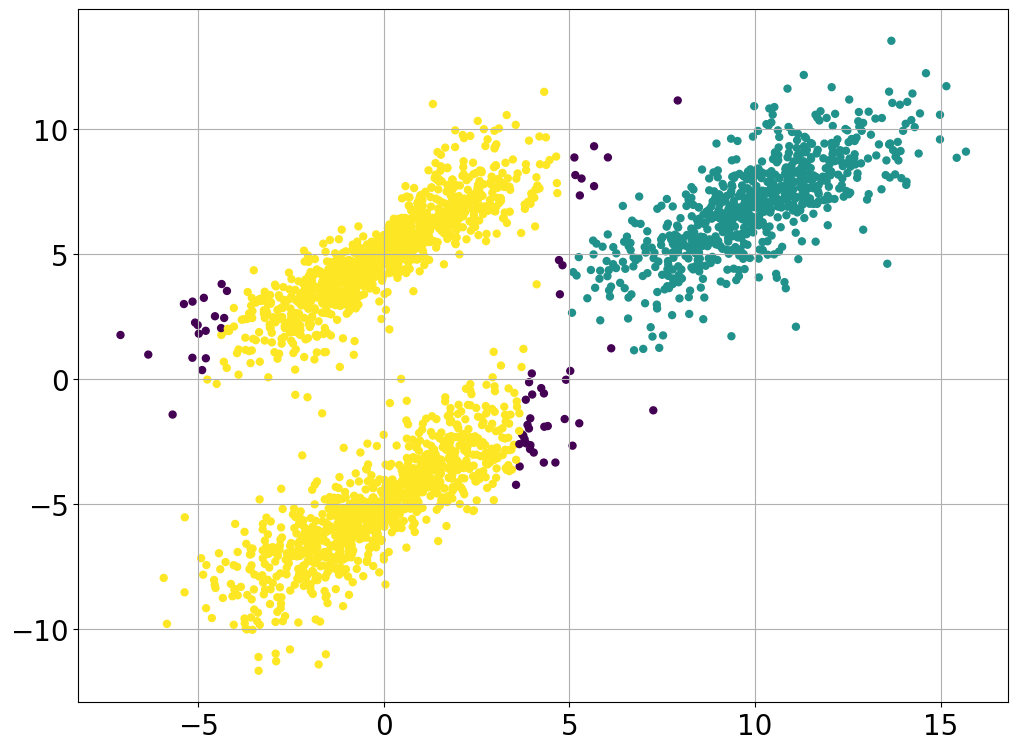}}\quad
		\subfigure[CopMixM\_BSQI]{\includegraphics[scale=0.2]{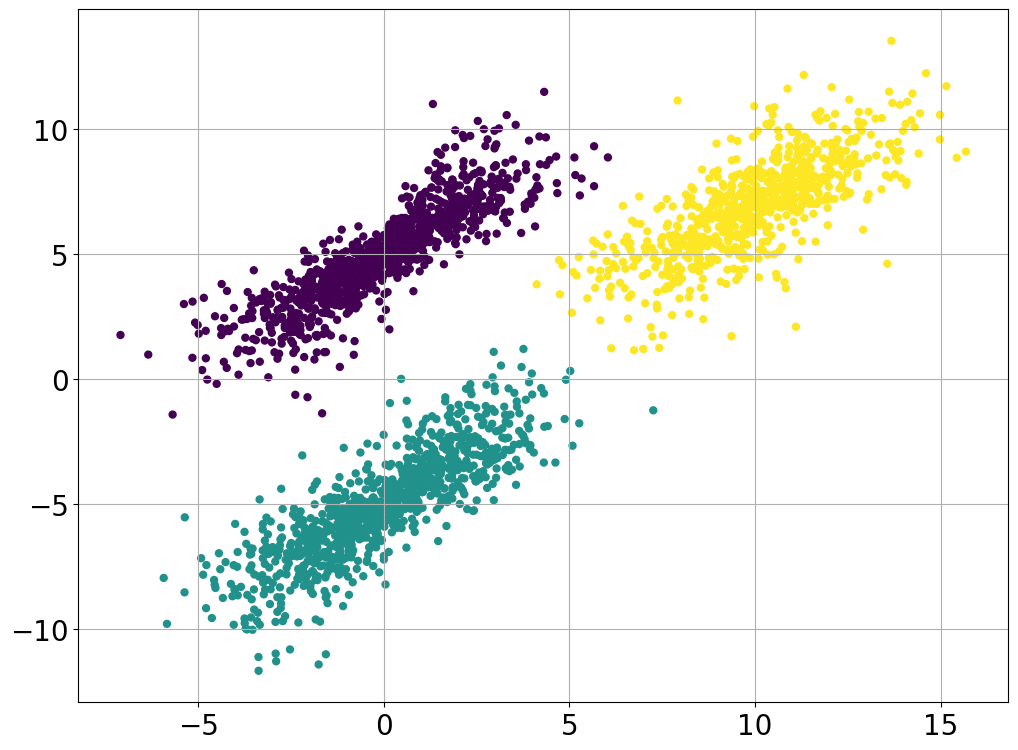}}\quad
  \subfigure[CopMixM\_Emp]{\includegraphics[scale=0.2]{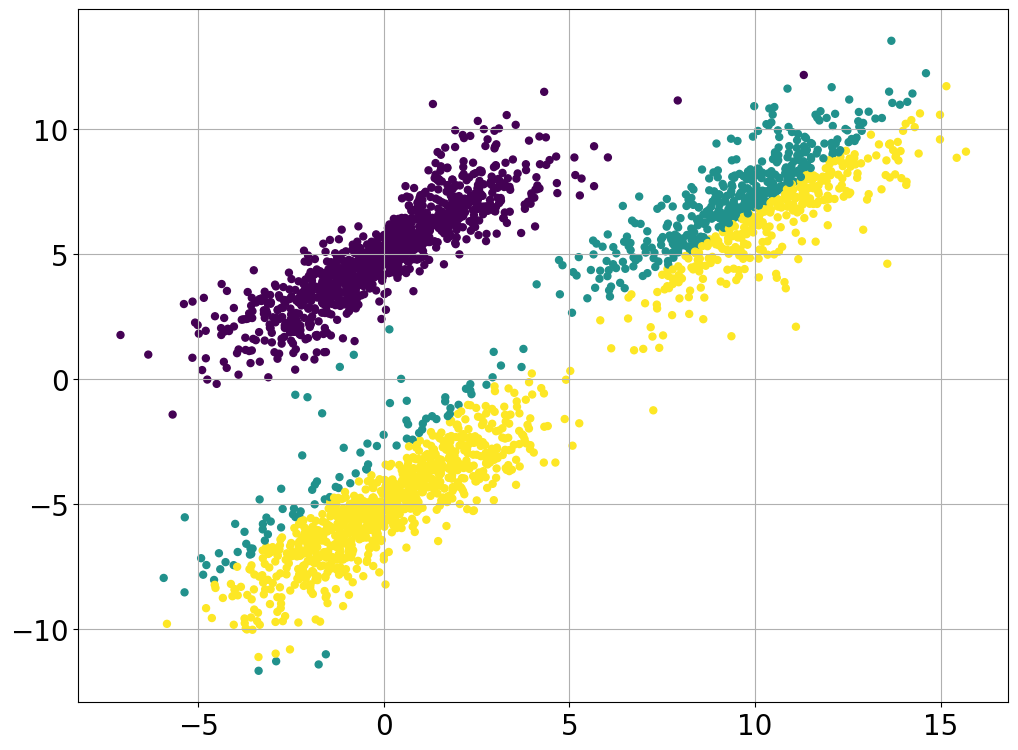}}\quad
		\caption{Synthetic dataset $\mathcal{X}_3$: (a) GMM, (b) CopMixM\_BSQI, (c) CopMixM\_KDEpy}
		\label{fig:synt3_results}
\end{figure}

\begin{table}
\centering
 
\begin{adjustbox}{max width=1\textwidth}
\begin{tabular}{lcccc}
\hline
\hline
& K-Means & GMM & CopMixM\_BSHQI & CopMixM\_KDEpy \\
\hline \hline
 Silhouette Score &{\bf 0.638} &    0.253 &        {\bf 0.638} &          0.384     \\
 Calinski-Harabasz Index &{\bf 7874}  & 1420     &      7710     &             702  \\
 Davies-Bouldin Score    &{\bf 0.474}  &    3.8   &       0.476&          1.47    \\
 \hline
 Adjusted Rand Score     &0.960 &    0.476 &        {\bf 0.999} &      0.615       \\
 Homogeneity Score       & 0.932&    0.518 &        {\bf 0.997} &          0.598    \\
 Rand Score              &0.982 &    0.724 &        {\bf 1}     &           0.804   \\
 Completeness Score      &0.931 &    0.841 &        {\bf 0.997} &            0.771   \\
\hline
\hline
\end{tabular}
\end{adjustbox}
\caption{Clustering metrics for Synthetic Dataset $\mathcal{X}_3$. In bold the best values.}
\label{tab:synt3_results}
\end{table}
\begin{table}
	\centering
    
	\begin{adjustbox}{max width=1\textwidth}
	\begin{tabular}{c|c|c||c|c}
		\hline
  \hline
		\bf Clusters & \# Points & Different Copulas  & \# Points& One Copula  \\ 
		\hline
  \hline
		\textbf{1} & 702 & Frank  &  698 &Gaussian \\
		\textbf{2} & 1001 & Frank  & 935 & Gaussian\\
		\textbf{3} & 998 & Frank  & 1067 & Gaussian\\
	
    \hline
	Log-likelihood & \multicolumn{2}{c||}{\bf -12644} & \multicolumn{2}{c}{-13835}\\
		\hline

	\end{tabular}
	\end{adjustbox}
    \caption{Results of CopMixM\_BSHQI for the Synthetic Dataset $\mathcal{X}_3$ obtained with different copulas and the Gaussian copula.}
    \label{tab:synt3_diffcop}
\end{table}

\begin{figure}
	\centering
    \subfigure[GMM]{\includegraphics[scale=0.25]{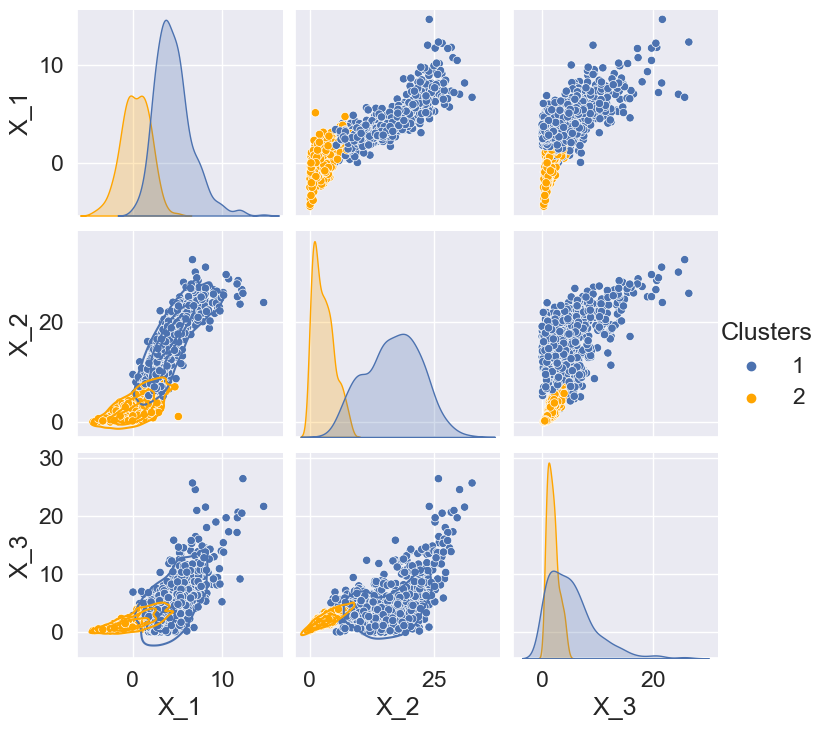}}\quad
    \subfigure[CopMixM\_BSHQI]{\includegraphics[scale=0.25]{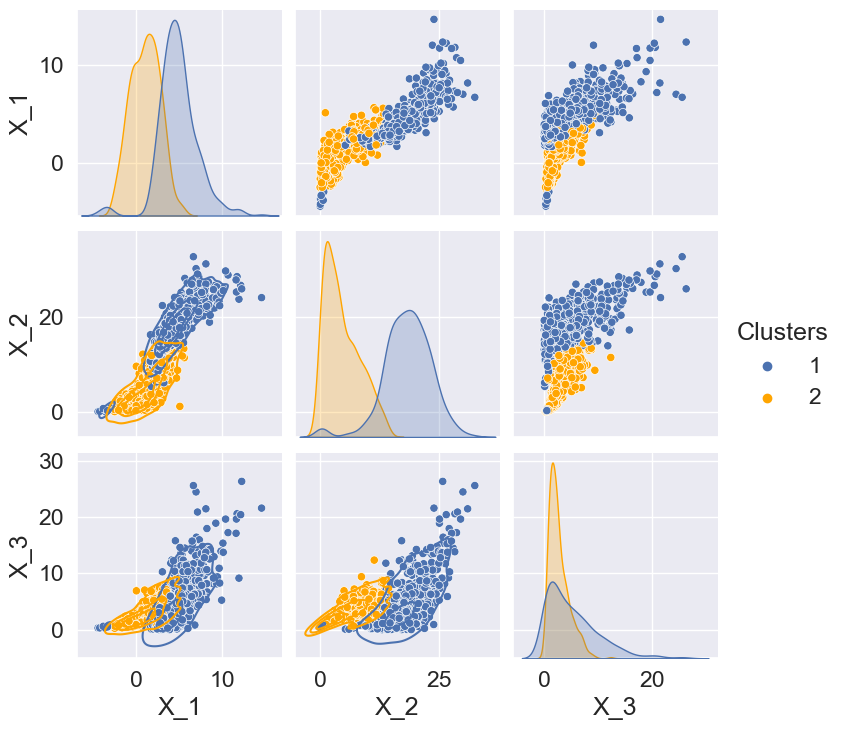}}\quad
	\caption{Synthetic dataset $\mathcal{X}_4$: (a) GMM , (b) CopMixM\_BSHQI}
	\label{fig:synt_3d}
\end{figure}
\begin{table}
\centering
 \begin{adjustbox}{max width=1\textwidth}
\begin{tabular}{lcccc}
\hline
\hline
& K-Means& GMM & CopMixM\_BSHQI & CopMixM\_KDEpy \\
\hline
 Silhouette Score&{\bf 0.576} & 0.504 &         0.561 &    0.536           \\
 Calinski-Harabasz Score&{\bf 2030} & 1420     &     1850     &     1630         \\
 Davies-Bouldin Score& 0.601   &    {\bf 0.592} &        0.618 &        0.661     \\
 \hline
 Adjusted Rand Score& 0.595    &    0.544 &        {\bf 0.695} &       0.685         \\
 Homogeneity Score&  0.488     &    0.528 &        0.589 &        {\bf 0.659}        \\
 Rand Score&   0.797           &    0.772 &        {\bf 0.848} &        0.843       \\
 Completeness Score&  0.489    &    0.555 &        0.594 &         {\bf 0.595}      \\
\hline
Miss-classification Rate: & 0.12 &0.11 ($\pm 0.02$) & {\bf 0.08} ($\mathbf{\pm 0.01}$) & 0.09 ($\pm 0.01$)\\
\hline
\end{tabular}
\end{adjustbox}
\caption{Clustering metrics for Synthetic Dataset $\mathcal{X}_4$. In bold the best values.}
\label{tab:synt4_results}
\end{table}

%\begin{figure}[htbp]
%		\centering
%		\subfigure[Pairwise]{\includegraphics[width=5cm]{images/piarplot_3dexample.png}}\quad
%		\subfigure[3D Scatter-plot]{\includegraphics[width=5cm]{images/example3d.png}}\quad
%		\caption{Artificial data 3D dataset with two clusters. The diagonal of the plot on the left shows the corresponding %variable’s marginal density function.}
%		\label{synt5}
%\end{figure}

We examine four synthetic datasets $\mathcal{X}_1, \mathcal{X}_2,\mathcal{X}_3, \mathcal{X}_4$ characterized as follows: 
\begin{itemize}
\item the samples in $\mathcal{X}_1$ are drawn from two Claytons copula, one Frank and one Gumbel, see Figure \ref{fig:gt_synt} (a);
\item the samples in $\mathcal{X}_2$ constitutes $4$ clusters and  drawn from  2 Clayton copulas and 2 Gumbel copulas, see Figure \ref{fig:gt_synt} (b);
\item the samples in $\mathcal{X}_3$ constitutes $3$ clusters drawn from the Frank copula, see Figure \ref{fig:gt_synt} (c);
\item the samples in $\mathcal{X}_4$ constitutes $2$ clusters of 3D scattered points, see Figure \ref{fig:gt_synt} (d)-(e). 
\end{itemize}

 All the presented results are obtained with random initialization and with the selection of different copulas. Regarding dataset $\mathcal{X}_1$, in Figure \ref{fig:synt1_results}, the outcomes from  GMM, CopMixM\_BSHQI and clustering with copulas fitting the marginals using the KDEpy approach are shown. Visually, the best results are obtained with our approach, see Figure \ref{fig:synt1_results} (b), and KDEpy Figure \ref{fig:synt1_results} (c), as they closely aligns with the ground truth in Figure \ref{fig:gt_synt} (a). The metrics detailed in Table \ref{tab:synt1_results} show a more quantitative comparison for the four used clustering algorithms. 
Notably, CopMixM\_BSHQI outperforms  GMM and gives better results compared to the other  methods with respect to various metrics. The highest Silhouette Score, Calinski-Harabasz Index, and Homogeneity Score indicate a superior cluster quality and better separation. For the Davies-Bouldin Score, where a lower value is desirable in presence of well-defined and compact clusters, CopMixM\_BSHQI again excels.
The Adjusted Rand Score, Rand Score, and Completeness Score further support the dominance of CopMixM\_BSHQI, as it achieves perfect scores$=1.0$ in these metrics, signifying strong agreement with the ground truth. In contrast, K-Means, GMM and CopMixM\_KDEpy exhibit lower scores, reflecting a lower level of agreement with the true cluster assignments.
Moreover, in Table \ref{tab:synt1_diffcop}, we present the results of CopMixM\_BSHQI on the synthetic dataset $\mathcal{X}_1$ under various copula configurations. Two primary scenarios were considered: one utilizing diverse copulas for specific clusters, and the other employing a single Gaussian copula, which is commonly used in practice and is often considered the default choice in mixture models.
 Evaluation was based on the maximum likelihood for the choice of the cluster's copula and the Log-likelihood for the overall mixture model performance. The results suggest to adopt diverse copulas tailored to specific clusters rather than using a single one. Indeed, the higher log-likelihood highlights the superior model adaptability, to the considered synthetic dataset $\mathcal{X}_1$, compared to the simplistic use of a single Gaussian copula.

Moreover, with respect to datasets $\mathcal{X}_2$ and $\mathcal{X}_3$, the findings depicted in Figures \ref{fig:synt2_results} and \ref{fig:synt3_results}, along with the corresponding metrics presented in Tables \ref{tab:synt2_results} and \ref{tab:synt3_results}, confirm the superiority of the CopMixM\_BSHQI methodology over the other three methodologies. This validate that CopMixM\_BSHQI works better not only in effectively distinguishing distinct clusters but also in appropriately assigning the suitable copula for each of them. This is further illustrated in Tables \ref{tab:synt2_diffcop} and \ref{tab:synt3_diffcop}, which show again the better performance in the use of diverse copulas with respect to a single copula. In these instances as well, the  log-likelihood supports this conclusion.

In the latest experiment, we show the effectiveness of the algorithm using the synthetic dataset \(\mathcal{X}_4\), which is in 3D  and comprises of two clusters. The results are presented in Figure \ref{fig:synt_3d} where only GMM and CopMixM\_BSHQI are shown as the results obtained with K-Means and CopMixM\_KDEpy are similar to the last one. Moreover in Table \ref{tab:synt4_results} the achieved metrics are reported. It can be seen that the worst performance is given by GMM, while the best sometimes is achieved by CopMixM\_KDEpy, altought the discrepancy with CopMixM\_BSHQI is minimal. For this test, since there are only two classes, we have also included accuracy in terms of 
miss-classification rate. As can be observed in this scenario  the CopMixM\_BSHQI performance surpasses that of the conventional K-Means, GMM and KDEpy based approach.
In summary, these results highlight the algorithm's effectiveness in capturing complex structures within synthetic datasets, positioning it as a robust choice for clustering activities in similar contexts. The findings suggest that the choice of copula significantly influences goodness of fit, with specific copula types being favored by certain clusters. Therefore, the model has an overall robust performance, particularly in contexts where the option to choose from multiple copulas is available, compared to relying on a single copula.

\subsection{Real Datasets}

We conduct experiments to validate the effectiveness of the CopMixM\_BSHQI algorithm on real-world datasets. Our analysis considers three datasets: the Australian Institute of Sport (AIS), Breast Cancer Wisconsin, and a case study involving the clustering of textual data using the 20newsgroups dataset\footnote{\url{https://scikit-learn.org/0.19/datasets/twenty_newsgroups.html}}. For the AIS\footnote{\url{https://rdrr.io/cran/GLMsData/man/AIS.html}} and Breast Cancer Wisconsin datasets\footnote{\url{https://archive.ics.uci.edu/dataset/14/breast+cancer}}, we compare our approach against a specific method referenced in \cite{Vine_SAHIN2022}, where our initialization settings are considered the same as in \cite{Vine_SAHIN2022} wherever possible. In the case of the 20newsgroups dataset, we compare our results with the conventional GMM.
\subsubsection{AIS}

The AIS dataset, comprising 13 measurements collected from 102 male athletes and 100 female athletes. The primary objective of clustering this dataset is to assess the effectiveness of our methodologies in producing results aligned with the given ground-truth. We consider the same settings as in \cite{Vine_SAHIN2022}, focusing on a subset of five variables: lean body mass (LBM), weight (Wt), body mass index (BMI), white blood cell count (WBC), and percentage of body fat (PBF).

This particular example exhibits a non-Gaussian distribution, as the contours are non-elliptical and displays asymmetric dependency patterns. Therefore, the application of copulas seems to be the optimal one for capturing this more complex form of dependency. Our algorithm was ran with both K-Means and random initialization. Furthermore, we explored the use of different copula choices. We utilized the miss-classification rate metric to assess the performance of the clustering algorithm. The results obtained with our approach, employing both types of initialization, are presented in the Table \ref{tab:ais_results} alongside with those reported in \cite{Vine_SAHIN2022}.

\begin{table}[htbp]
\centering
\begin{adjustbox}{max width=1\textwidth}
\begin{tabular}{lcccc}
\hline
\hline
& k-means & \makecell{VCMM\\k-means}   &\makecell{CopMixM\_BSHQI\\k-means} &\makecell{CopMixM\_BSHQI\\random} \\
\hline 
& & & \\
Miss-classification rate (Bins=$\lceil n^{1/3}\rceil$)  & 0.21 & {\bf 0.040}  &   0.090    & 0.045  \\
Miss-classification rate (Bins=Rice)   & 0.21 & 0.040   &   0.040    & {\bf 0.035}  \\
& & & \\
\hline
\end{tabular}
\end{adjustbox}
\caption{Results for AIS dataset. In bold the best values. }
\label{tab:ais_results}
\end{table}

The methods called VCMM (k-means) used in \cite{Vine_SAHIN2022} for this dataset, initialized with k-means  reaches the best Miss-classification  value if in our model we set the number of bins equal to $\lceil n^{1/3}\rceil$, however the CopMixM\_BSHQI with random initialization, shows a competitive result and in particular when the Rice Rule is adopted, it achieves the best results. 
 This discrepancy in performance between the two initialization methods indicates that random initialization produces more favorable results for CopMixM\_BSHQI in the context of the AIS dataset.

These results generally highlight the sensitivity of the mixture model algorithm to the choice of initialization method. While the algorithm CopMixM\_BSHQI generally works well, it is critical to consider the impact of initialization on its effectiveness. The higher miss-classification rate observed with K-Means initialization should be taken into account when implementing CopMixM\_BSHQI in scenarios similar to the AIS dataset and its further investigation will be object of future work especially as it seems to improve by setting a specific number of bins. Indeed, by choosing the Rice rule, the CopMixM\_BSHQI results to be very competitive with respect to both initializations.

\subsubsection{Breast Cancer Wisconsin (Diagnostic)}
 The Breast Cancer Wisconsin (Diagnostic) dataset from the UCI Machine Learning Repository  \cite{breast_canc95} consists of digitized images of fine needle aspirates from breast masses from 569 patients. For each of the considered ten features, the mean value, extreme value (mean of the three largest values), and standard error are computed, resulting in 30 total features. The dataset comprises of benign (352 patients) and malignant (212 patients) diagnoses, enabling the measurement of miss-classification rates for binary classification algorithms. In this study, the same features as in \cite{Vine_SAHIN2022} were considered, specifically: perimeter standard error (PSE), extreme smoothness (ES), extreme concavity (EC), and extreme concave points (ECP).

The copMxM\_BSHQI algorithm was applied with both random initialization and K-Means and by setting two different numbers for the bins. Our results are reported in the Table \ref{tab:Breast_results}, and a comparison is made with the outcomes presented in the study in \cite{Vine_SAHIN2022}.
\begin{table}[htbp]
\centering
\begin{adjustbox}{max width=1\textwidth}
\begin{tabular}{lccccccccc}
\hline
\hline
& K-Means&\makecell{VCMM\\ k-means} & \makecell{VCMM\\ (C-vine)} & \makecell{Multivariate\\ normal} & \makecell{Multivariate\\ skew normal} & \makecell{Multivariate\\ t} & \makecell{Multivariate\\ skew t}  &\makecell{CopMixM\_BSHQI\\k-means} &\makecell{CopMixM\_BSHQI\\random} \\
\hline 
\hline
& & & & & & & &\\
\makecell{Miss-classification\\ rate Bins=$\lceil n^{1/3}\rceil$}     & 0.14  &    0.10 &    0.18& 0.12 & 0.15    &   0.11 & 0.15 & {\bf  0.09} &   0.10  \\

\makecell{Miss-classification\\ rate (Bins=Rice) } & 0.14  &    0.10 &    0.18& 0.12 & 0.15    &   0.11 & 0.15  &  {\bf 0.08}& {\bf 0.08}\\
& & & & & & &\\
\hline
\end{tabular}
\end{adjustbox}
\caption{Results for Breast Cancer Dataset. In bold the best values.}
\label{tab:Breast_results}
\end{table}
 The results indicate that CopMixM\_BSHQI with K-Means and with the two choices of bins  %and random initialization with Rice bins  
 outperforms the other approaches, achieving a low miss-classification rate of 0.09 and 0.084, respectively. In this case, with the random initialization we can observe  a miss-classification rate $0.082$  only by choosing the Rice bins. 
 This indicate a high degree of accuracy in grouping data points, demonstrating the capability of our methodologies to discern underlying patterns within the dataset.
In contrast, the other methods from \cite{Vine_SAHIN2022}, such as VCMM, VCMM (C-vine), Multivariate normal, Multivariate skew normal, Multivariate t, and Multivariate skew t, exhibit comparatively higher miss-classification rates ranging from 0.10 to 0.18. 
The results show the effectiveness of the CopMixM\_BSHQI approach, when combined with either K-Means or random initialization.

\subsubsection{Text Clustering}
 For the last experiment, we focus on the well-known 20newsgroups dataset, accessible through Scikit-Learn's API and designed for linguistic analysis. This dataset contains online discussions, or newsgroups, categorized into different thematic groups. For our clustering analysis, we specifically selected texts related to technology, religion, and sports.

To convert the textual data into a format suitable for quantitative analysis, we employ the TfidfVectorizer (implemented in \texttt{scikit-learn} library in Python\footnote{\url{https://scikit-learn.org/stable/modules/generated/sklearn.feature_extraction.text.TfidfVectorizer.html}}). This essential tool facilitates the transformation of text into numerical vectors by calculating Term Frequency-Inverse Document Frequency (TF-IDF) values for each term. TF-IDF reflects the importance of each term within individual documents relative to the entire dataset, allowing us to capture semantic information.

It is crucial to note that the TfidfVectorizer is also pivotal for the clustering process. We transform textual data into numerical representations, creating a dataset with $18,846$ rows and $24,471$ features.  Considering the big size of the dataset, a preliminary step in our clustering analysis involves the application of a dimensionality reduction technique. Specifically, we employ Truncated Singular Value Decomposition (T-SVD)\footnote{\url{https://scikit-learn.org/stable/modules/generated/sklearn.decomposition.TruncatedSVD.html}} with a predefined number of components set to 2. This preliminary step is fundamental to mitigate the challenges posed by high-dimensional data, allowing for a more efficient and manageable representation of the underlying structure.

We present scatter plots for GMM, CopMixM\_BSHQI and CopMixM\_KDEpy in Figure \ref{fig:text}. Additionally, the Table \ref{tab:text_results} provides a comparative overview of clustering performance between K-Means, GMM,   CopMixM\_BSHQI and CopMixM\_KDEpy algorithms applied to the 20newsgroup dataset. Again the proposed procedure achieves the best results with respect to Silhouette score and Davies-Bouldin Score, while is the runner up for the Calinski-Harabasz score, but we observe a small discrepancy with the best result provided by K-Means.
\begin{figure}[htbp]
		\centering
		\subfigure[GMM]{\includegraphics[width=5cm]{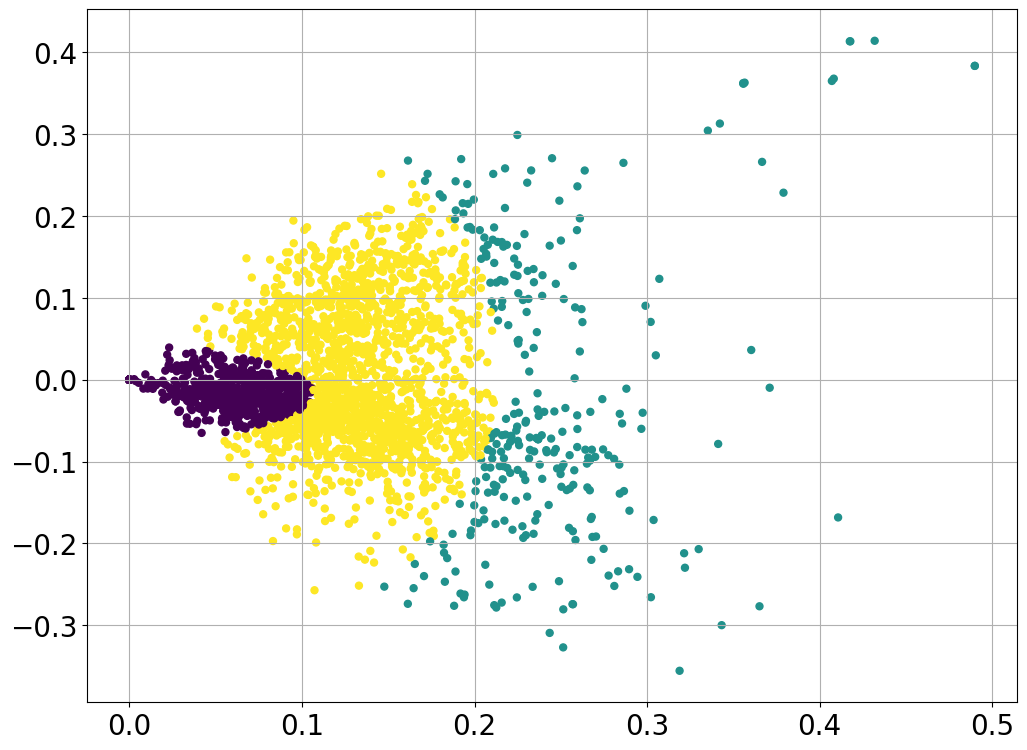}}\quad
		\subfigure[CopMixM\_BSHQI]{\includegraphics[width=5cm]{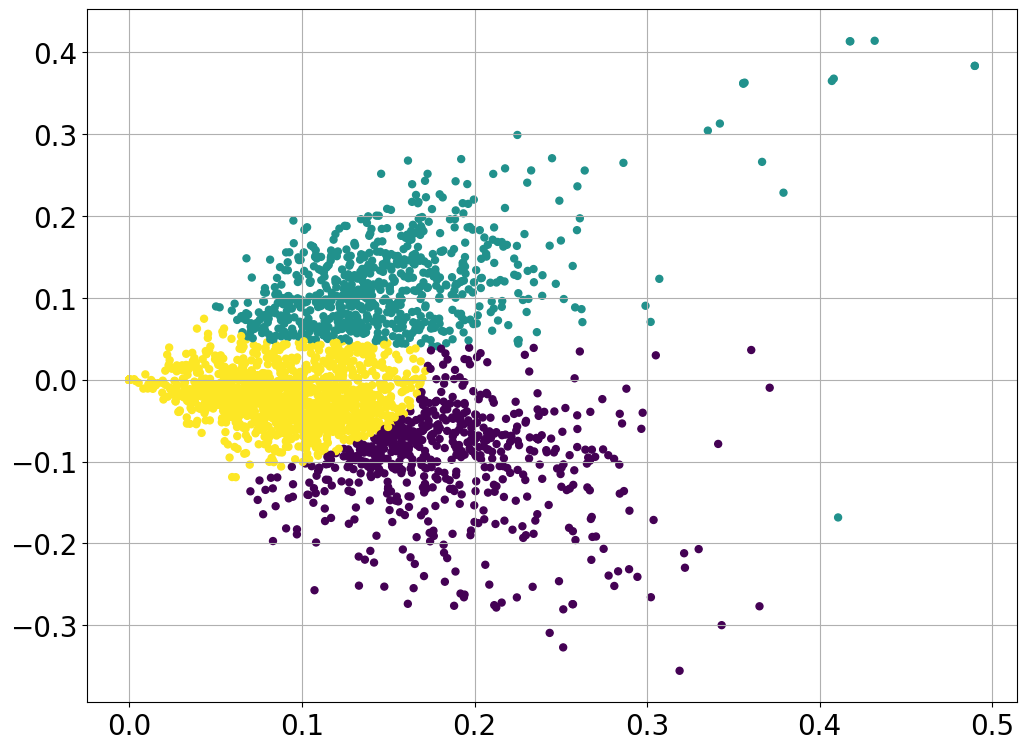}}\quad
  		\subfigure[CopMixM\_KDEpy]{\includegraphics[width=5cm]{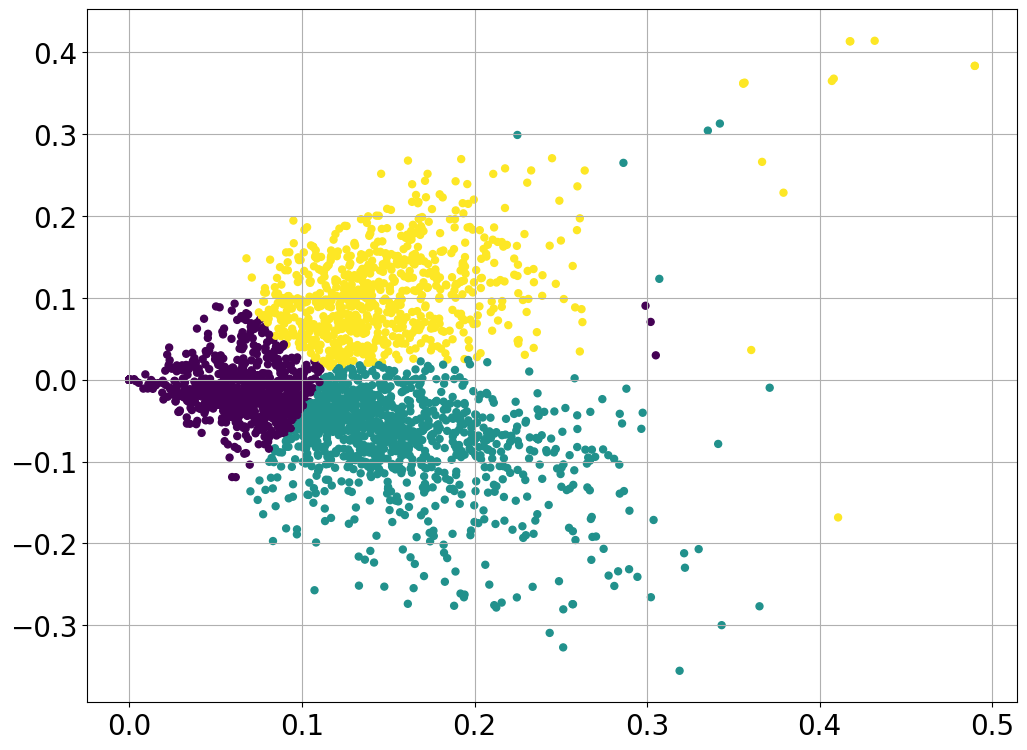}}\quad
		\caption{Text Clustering results.}
		\label{fig:text}
\end{figure}

\begin{table}[htbp]
\centering
 \begin{adjustbox}{max width=1\textwidth}
\begin{tabular}{lcccc}
\hline
\hline
& K-Means & GMM & CopMixM\_BSHQI &CopMixM\_KDEpy \\
\hline
 Silhouette Score    &  0.423  &   0.154  &        {\bf 0.425}  &  0.373  \\
 Calinski-Harabasz Score & {\bf 2326} & 386      &      2200   &        1900        \\
 Davies-Bouldin Score  & 0.896 &   1.89   &        {\bf 0.849}  &     0.856        \\
 \hline
\hline
\end{tabular}
\end{adjustbox}
\caption{Clustering metrics for 20newsgroup Dataset. In bold the best values.}
\label{tab:text_results}
\end{table}

The highest  Silhouette Score emphasizes a better ability to create well-defined clusters similarly, the highest Davis-Boudin score assesses a better compactness and separation between clusters.
The Calinski-Harabasz Score reflects the effectiveness in achieving a favorable ratio of between-cluster to within-cluster variance. 
In summary, across multiple clustering metrics, CopMixM\_BSHQI outperforms GMM, with random initialization, as highlighted by the values in bold and provides a correct interpretation of the undrlying statistical distribution. 
%This underscores CopMixM\_BSHQI as a more effective clustering algorithm.

\section{Conclusions}\label{sub:conclusion}

In this paper we presented a novel algorithm for empirical density estimation and we used it for cluster modeling based on the use of Copulas. In particular, the multivariate copulas distribution rely on the estimation of the marginal distributions based on the Hermite Quasi-interpolant in \cite{Mazzia2009}. The proposed construction is superior in terms of statistical significance with respect to classical approaches based on empirical kernel density estimation and provides  consistent cumulative distribution functions as outlined in the detailed analysis of Section \ref{sub:qispline}. The novel clustering algorithm allows for the automatic selection of the best copula among a certain set of copulas families and provides a robust strategy with respect to a random seed as initialization. Moreover, the obtained results show a rather good agreement with the ground-truth (when provided), and mostly we are able to correctly identify the underlying statistical distribution from where the given points where drawn. The obtained clusters exhibit good shape parameters, in terms of Silhouette Score, Calinski-Harabasz Score and Davis-Bouldin Score, and  achieve good accuracy in terms of permutation invariant metrics. Future work will be devoted to investigate the choice of the optimal bandwidth and further to deeply analyze how to deal with overlapping clusters, like in the text mining example.

\section*{Acknowledgments}
Cristiano Tamborrino and Francesca Mazzia acknowledge the support of the PNRR project FAIR - Future AI Research
(PE00000013), Spoke 6 - Symbiotic AI (CUP H97G22000210007) under the NRRP MUR program funded by the NextGenerationEU.
The research of Antonella Falini is founded by  PON 
Ricerca e Innovazione 2014-202 FSE REACT-EU, Azione IV.4 “Dottorati e contratti di ricerca su tematiche dell’innovazione”
CUP H95F21001230006.
The authors thank the GNCS for its valuable support under the INdAM-GNCS project CUP E55F22000270001.

%\newpage
%\nocite{*}
\bibliographystyle{siam}
\bibliography{clustcopbib}

\end{document}